\newtheorem{thm}{Theorem} 
\title{Dimensional Reweighting Graph Convolutional Networks }
\author{
    \large{Xu Zou$^{1}$, Qiuye Jia$^{2}$, Jianwei Zhang$^{3}$, Chang Zhou$^{3}$,  Hongxia Yang$^{3}$, Jie Tang$^{1}$}\\
    $^1$ Department of Computer Science and Technology, Tsinghua University \\
    $^2$ Institute for Computational \& Mathematical Engineering, Stanford University\\
    $^3$ DAMO Academy, Alibaba Group\\
    \texttt{zoux18@mails.tsinghua.edu.cn,jietang@tsinghua.edu.cn}\\\texttt{jqy@stanford.edu}\\\texttt{\{zhangjianwei.zjw, ericzhou.zc, yang.yhx\}@alibaba-inc.com}
}
\newcommand{\vpara}[1]{\vspace{0.07in}\noindent\textbf{#1 }}
\newcommand{\model}{{\em DR}}
\newcommand{\jie}[1]{\textbf{\color{red}[(JT: #1 )]}}  
\newcommand{\beq}[1]{\vspace{-0.18in}\begin{equation}#1\end{equation}\vspace{-0.2in}}
\newcommand{\hide}[1]{}
\newcommand{\zc}[1]{\textcolor{blue}{[ZC: ** #1  **]}}
\newcommand{\xz}[1]{\textcolor{green}{[ZX: **#1   **]}}
\newcommand{\yzj}[1]{\textcolor{blue}{[YZJ: #1 ]}}
\newcommand{\tj}[1]{\textcolor{purple}{[TJ: #1]}} 
\newcommand{\jt}[1]{}
\begin{document}

\maketitle

\begin{abstract}
     In this paper, we propose a method named \textbf{D}imensional \textbf{r}eweighting \textbf{G}raph \textbf{C}onvolutional \textbf{N}etworks (DrGCNs), to tackle the problem of {variance between dimensional information} in the node representations of GCNs. We prove that DrGCNs have the effect of stabilizing the training process by connecting our problem to the theory of the mean field. However, practically, we find that the degrees DrGCNs help vary severely on different datasets. We revisit the problem and develop a new measure $K$ to quantify the effect. This measure guides when we should use dimensional reweighting in GCNs and how much it can help. Moreover, it offers insights to explain the improvement obtained by the proposed DrGCNs. The dimensional reweighting block is light-weighted and highly flexible to be built on most of the GCN variants. Carefully designed experiments, including several fixes on duplicates, information leaks, and wrong labels of the well-known node classification benchmark datasets, demonstrate the superior performances of DrGCNs over the existing state-of-the-art approaches. Significant improvements can also be observed on a large scale industrial dataset.  
\end{abstract}

\vspace{-0.1in}
\section{Introduction}

Deep neural networks (DNNs) have been widely applied in various fields, including computer vision ~\citep{he2016deep,hu2018squeeze}, natural language processing ~\citep{devlin2018bert}, and speech recognition ~\citep{abdel2014convolutional}, among many others. 
Graph neural networks (GNNs) is proposed for learning node presentations of networked data~\citep{scarselli2009graph}, and later be extended to graph convolutional network (GCN) that achieves better performance by 
capturing topological information of linked graphs~\citep{kipf2017semi}.
Since then, GCNs begin to attract board interests. 
Starting from GraphSAGE~\citep{hamilton2017inductive} defining the convolutional neural network based graph learning framework as \textit{sampling} and \textit{aggregation}, many follow-up efforts attempt to enhance the sampling or aggregation process via various techniques, such as attention mechanism~\citep{velivckovic2018graph}, mix-hop connection~\citep{abu2019mixhop} and adaptive sampling~\citep{huang2018adaptive}.

In this paper, we study the node representations in GCNs from the perspective of covariance between dimensions. Suprisingly, applying a dimensional reweighting process to the node representations may be very useful for the improvement of GCNs. As an instance, under our proposed reweighting scheme, the input covariance between dimensions can be reduced by 68\% on the Reddit dataset, which is extremely useful since we also find that the number of misclassified cases reduced by 40\%, compared with the previous SOTA method.

We propose \textbf{D}imensional \textbf{r}eweighting \textbf{G}raph \textbf{C}onvolutional \textbf{N}etworks (DrGCNs), in which the input of each layer of the GCN is reweighted by global node representation information.
 Our discovery is that the experimental performance of GCNs can be greatly improved under this simple reweighting scheme. On the other hand, with the help of mean field theory~\citep{kadanoff2009more,yang2019mean}, this reweighting scheme is also proved to improve the stability of fully connected networks, provding insight to GCNs. To deepen the understanding to which extent the proposed reweighting scheme can help GCNs, we develop a new measure to quantify its effectiveness under different contexts (GCN variants and datasets).
 
 Experimental results verify our theoretical findings ideally that we can achieve predictable improvements on public datasets adopted in the literature over the state-of-the-art GCNs. While studying on these well-known benchmarks, we notice that two of them (Cora, Citeseer) suffer from duplicates and feature-label information leaks. We fix these problems and offer refined datasets for fair comparisons. To further validate the effectiveness, we deploy the proposed DrGCNs on A* \footnote{To preserve anonymity we use A* for the company and dataset name.}  company's recommendation system and clearly demonstrate performance improvements via offline evaluations.
\vspace{-0.1in}

\section{DrGCNs: Dimensional reweighting Graph Convolutional Networks}
\vspace{-0.05in}
\subsection{Preliminaries}
\vspace{-0.05in}
\vpara{Notations.}
We focus on undirected graphs $\mathcal{G}=(\mathcal{V},\mathcal{E},\mathbf{X})$, where $\mathcal{V}=\{v_i\}$ represents the node set, $\mathcal{E}=\{(v_i,v_j)\}$ indicates the edge set, and $\mathbf{X}$ stands for the node features. For a specific GCN layer, we use $\mathbf{R}^{in}=(\mathbf{r}^{in}_1,...,\mathbf{r}^{in}_n)$ to denote the input node representations and $\mathbf{R}^{out}=(\mathbf{r}^{out}_1,...,\mathbf{r}^{out}_n)$ to symbolize the output representations.\footnote{For the convenience of our analysis, we use columns of $\mathbf{R},\mathbf{H},\mathbf{X}$ instead of rows to represent node representations.} For the whole layer-stacked GCN structure, we use $\mathbf{H}^0$ to denote the input node representation of the first layer, and $\mathbf{H}^l(l>0)$ to signify the output node representation of the $l^{th}$ layer, which is also the output representation of the $(l-1)^{th}$ layer. 
Let $\mathbf{A}$ be the adjacency matrix with $a_{ij}=1$ when $(v_i,v_j)\in\mathcal{E}$ and $a_{ij}=0$ otherwise. 

\vpara{Graph Convolutional Networks (GCNs).}
Given the input node set $\mathcal{V}$, the adjacency matrix $\mathbf{A}$, and the input representations $\mathbf{R}^{in}$, a GCN layer uses such information to generate output representations $\mathbf{R}^{out}$:

\beq{
\label{eq:ggcn}
    \mathbf{R}^{out}=\sigma(aggregator(\mathbf{R}^{in},\mathbf{A})),
}

\noindent where $\sigma$ is the activation function.
%
Although there exist non-linear aggregators like the LSTM aggregator~\citep{hamilton2017inductive}, in most GCN variants the aggregator is a linear function which can be viewed as a weighted sum of node representations among the neighborhood~\citep{kipf2017semi,huang2018adaptive}, followed by a matrix multiplication on a refined adjacency matrix $\mathbf{\tilde{A}}$, with a bias added. 
The procedure can be formulated as follows:

\beq{
\label{eq:gcn}
    \mathbf{R}^{out}=\sigma(\mathbf{W}\mathbf{R}^{in}\mathbf{\tilde{A}}+\mathbf{b}),
}

where $\mathbf{W}$ is the projection matrix and $\mathbf{b}$ denotes the bias vector.
Development on GCNs mainly lies in different ways to generate $\mathbf{\tilde{A}}$. \jt{check}GCN proposed some variants including simply taking $\mathbf{\tilde{A}}=\mathbf{A}\mathbf{D}^{-1}$, which is uniform average among neighbors with $\mathbf{D}$ being the diagonal matrix of the degrees, or weighted by degree of each node  $\mathbf{\tilde{A}}=\mathbf{D}^{-\frac{1}{2}}\mathbf{A}\mathbf{D}^{-\frac{1}{2}}$, or including self-loops $\mathbf{\tilde{A}}=\mathbf{I}+\mathbf{D}^{-\frac{1}{2}}\mathbf{A}\mathbf{D}^{-\frac{1}{2}}$. Other methods include attention ~\citep{velivckovic2018graph}, or gated attention ~\citep{zhang2018gaan}, or even neural architecture search methods~\citep{gao2019graphnas} to generate $\mathbf{\tilde{A}}$.
To improve scalability, some GCN variants contain a sampling procedure, which samples a subset of the neighborhood for aggregation ~\citep{chen2018fastgcn,huang2018adaptive}.  We can 
set all unsampled edges to 0 in $\mathbf{\tilde{A}}$ in sampling-based GCNs, in this case $\mathbf{\tilde{A}}$ even has some randomness.
\vspace{-0.05in}
\subsection{Model Formulation for DrGCNs}
\vspace{-0.05in}
Given input node representations of a GCN layer $\mathbf{R}^{in}$, the proposed DrGCN tries to learn a dimensional reweighting vector $\mathbf{s}=(s_1,...,s_d)$, where 
$s_i$ is an adaptive scalar for each dimension $i$. 
This reweighting vector $\mathbf{s}$ then helps reweighting each dimension of the node representation $\mathbf{r}_v^{in}$ to
$\mathbf{r}^{re}_v$, $v\in \mathcal{V}$, where $\mathbf{r}^{re}_v=\mathbf{r}^{in}_v\circ\mathbf{s}$. Here we use $\circ$ to denote component-wise multiplication, i.e.,

\beq{
    r^{re}_{v,j}=s_jr^{in}_{v,j} ,\forall 1\leq j\leq d,\forall v\in\mathcal{V}.
}

We define $\mathbf{S}$ as the diagonal matrix with diagonal entries corresponding to the components of $\mathbf{s}$. Then a DrGCN layer can be formulated as:

\beq{
\label{eq:drgcn}
    \mathbf{R}^{out}=\sigma(\mathbf{W}\mathbf{S}\mathbf{R}^{in}\mathbf{\tilde{A}}+\mathbf{b}).
}

Inspired by SENet~\citep{hu2018squeeze}, we formulate the learning of the shared dimensional reweighting vector $\mathbf{s}$ in two stages. First we generate a global representation $\mathbf{r}^{in}$, whose value is the expectation of $\mathbf{r}_v^{in}$ on the whole graph.
Then we feed $\mathbf{r}^{in}$ into a two-layer neural network structure to generate $\mathbf{s}$ of the same dimension size.
Equation (\ref{eqn:se}) denotes the procedure to generate $\mathbf{s}$ given node weight $\{w_v|v\in\mathcal{V},\sum_{v\in\mathcal{V}} w_v=1\}$ and node representations $\{\mathbf{r}^{in}_v|v\in\mathcal{V}\}$:

\beq{
\centering
\label{eqn:se}
\begin{split}
    \mathbf{r}^{in}=&E[\mathbf{r}^{in}_v|v\in\mathcal{V}]=\sum_{v\in\mathcal{V}}w_v\mathbf{r}^{in}_{v},\\
    &\mathbf{g}=\sigma_g(\mathbf{W}_g\mathbf{r}^{in}+\textbf{b}_g),\\
    &\mathbf{s}=\sigma_s(\mathbf{W}_s\mathbf{g}+\textbf{b}_s),
    \end{split}
}

where $\mathbf{g}$ is output of the first layer; $\mathbf{W}_g,\mathbf{b}_g,\mathbf{W}_s,$ and $\mathbf{b}_s$ are parameters to be learnt.
Figure \ref{fig:dr} summarizes the dimensional reweighting block (Dr Block) in DrGCNs.

\begin{figure*}
    \centering
    \vspace{-.2in}
    \includegraphics[width=0.9\textwidth]{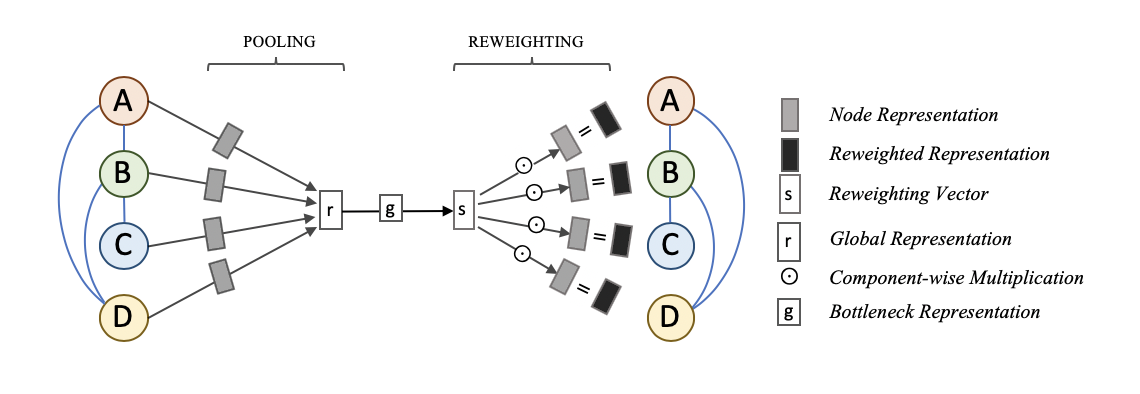}
    \vspace{-.25in}
    \caption{Our proposed dimensional reweighting block (Dr Block) in DrGCNs.}
    \label{fig:dr}
    \vspace{-.2in}
\end{figure*}

\vpara{Combining With Existing GCN variants.}
The proposed Dr Block can be implemented as an independent functional process and easily combined with GCNs. As shown in equation (\ref{eq:drgcn}), Dr Block only applies on $\mathbf{R}^{in}$ and does not involve in the calculation of $\mathbf{\tilde{A}},\mathbf{W}$ and $\mathbf{b}$.
Hence, the proposed Dr Block can easily be combined with existing sampling or aggregation methods without causing any contradictions. In \S~\ref{sec:exp}, we will experimentally test the combination of our Dr Block with different types of sample-and-aggregation GCN methods.
Suppose that the input features are $\mathbf{X}$, DrGCNs can be viewed as follows: 

\beq{
\label{eq:dr}
    \mathbf{H}^l=\sigma_l(\mathbf{W}^l\mathbf{S}^{l}\mathbf{H}^{l-1}\mathbf{\tilde{A}}^l+\mathbf{b}^l),\forall 1\leq l\leq k,
}

where $\mathbf{H}^0=\mathbf{X}$ and $\mathbf{H}^k$ being the output representation for a $k$-layer DrGCN:

\vpara{Complexity of Dr Block.} 
 Consider a GCN layer with $a$ input and $b$ output channels, $n$ nodes and $e$ edges in a sampled batch, the complexity of a GCN layer is $O(abn+be)$. The proposed Dr block has a complexity of $O(ag)$, where $g$ is the dimension of $\mathbf{g}$. In most cases, we have $g<b$ and $n>>1$, so we could have $O(ag)=o(abn+be)$, which indicates that Dr block introduces negligible extra computational cost.

\vspace{-0.1in}
\section{Theoretical Analyses}
\vspace{-0.1in}
\label{sec:the}

In this section, we connect our study 
to
mean field theory
~\citep{yang2019mean}. 
We theoretically prove that
the proposed Dr Block is capable of reducing the learning variance brought by perturbations on the input, making the update more stable in the long run. To deepen the understanding, we further develop a measure to quantify the stability gained by Dr block.
\vspace{-0.05in}
\subsection{Mean Field Approximation}
\vspace{-0.05in}
\hide{We analysis our Dr Block using mean field approximation, rather than dealing with high dimensional data, a continuous set of parameters is used.  
An analysis of deep neural networks using mean field approximation can be found in}~\citep{lee2017deep,yang2019mean} employ mean field approximation to analyze fully-connected networks. 
Following their ideas, we provide theoretical analyses for Dr Blocks on fully-connected networks.  GCNs are different from fully-connected networks only in $\mathbf{\tilde{A}}$, and degrade to fully-connected networks when $\mathbf{\tilde{A}}=\mathbf{I}$, our idea is to provide insight to GCNs from the analysis of fully-connected networks. We assume the average of the data is 0 in the following discussions as transformation does not affect the covariance structure. For simplicity, we only consider neural networks with constant width and assume all layers use the same activation function $\phi$. We follow the pre-activation recurrence relation $h_i^l = \mathbf{W}^l \sigma_l(\mathbf{S}^l \circ h_i^{l-1}) +\mathbf{b}^l$ in~ \citep{yang2019mean} to facilitate the problem. When $S$ being a diagonal matrix with positive entries, and $\phi$ is ReLU activation,  $\phi(\mathbf{SH}) =\mathbf{S}\phi(\mathbf{H})$ holds for all $\mathbf{H}$. We can take the pre-activation step as the post-activation step of the previous layer and generalize our analysis to post-activation. \hide{To be more concrete, there are three operators: multiplying $\mathbf{W}^l$, multiplying $\mathbf{S}^l$, and activating by ReLu. Now the last two commute, hence we can exchange them and gather consecutive three operations to form new layers with different order of operations.}So the recursive relation is:

\beq{
\mathbf{H}^l =\mathbf{W}^l \phi(\mathbf{S}^l \mathbf{H}^{l-1})+ \mathbf{b}^l .
}

We apply the mean field approximation to replace the input data by a normal random variable with the same mean and variance and define an estimator   $V_\phi$ to characterize the fluctuation of the random variable $\phi(\mathbf{S}^l \mathbf{H})$.

\textbf{Define:} 
\beq{
V_\phi(\mathbf{S}^l,\mathbf{C}^{l-1}) \triangleq E[\phi( \mathbf{S}^l \mathbf{H}) \phi(\mathbf{S}^l \mathbf{H})^T].
}

In this equation $\mathbf{H} \sim N(0,\mathbf{C}^{l-1})$, $\phi$ is the ReLU activation, and $\mathbf{C}^l$ represents the covariance matrix of $\mathbf{H}^l$, i.e. $\mathbf{C}^l=Cov(\mathbf{H}^l)$. Note that $V_{\phi}$ does not completely coincide with variance, but can reflect the covariance structure. We call it \textbf{covariance matrix} for convenience. With mean field approximation, the covariance matrix can be updated by (note that $\mathbf{b}^l$ is independent with $\mathbf{H}$):

\beq{
\label{eqn:ii}
\mathbf{C}^l = \mathbf{W}^l V_\phi(\mathbf{S}^l,\mathbf{C}^{l-1})(\mathbf{W}^l)^T+Cov(\mathbf{b}^l).
}

We assume this dynamical system has a BSB1(Block Symmetry Breaking 1~\citep{yang2019mean}) fixed point (where $\mathbf{C}^l=\mathbf{C}^{l-1}$) , i.e. a solution having the form $\mathbf{C}^*=q^*((1-c^*)\mathbf{I}+c^*\vec{1}\vec{1}^T)$ of the following equation with respect to $\mathbf{C}$:

\beq{
\mathbf{C} = \mathbf{W}^l V_\phi(\mathbf{S}^l,\mathbf{C})(\mathbf{W}^l)^T+Cov(\mathbf{b}^l).
}

Next we make a reduction of $V_\phi$ so that only the second slot matters. Take $\mathbf{S}^lh$ as a whole, it follows $N(0,\mathbf{S}^l\mathbf{C}^{l-1}\mathbf{S}^l)$ distribution. Note that $\mathbf{S}$ is diagonal so $\mathbf{S}=\mathbf{S}^T$. Thus equation (\ref{eqn:ii}) can be written as:

\beq{
\mathbf{C}^l = V_\phi(\mathbf{I},\mathbf{S}^l\mathbf{C}^{l-1}\mathbf{S}^l)+Cov(\mathbf{b}^l).
}

The derivative of $\mathbf{C}^l$  measures the fluctuation of our updating algorithm when input perturbations exist, hence it characterizes the sensitivity of the algorithm to the data structures and its robustness. We will turn to show that Dr can relieve this sensitivity. We fix the point $\mathbf{C}^l$  where we are taking derivative at. For most common types of activation functions, this recursive map has a fixed point, at which this linearization is most useful. Recall that such a derivative will be a linear map from symmetric matrices to symmetric matrices. 

\textbf{Define:}
\beq{
J_\phi(\mathbf{C}_1)  \triangleq \frac{d V_\phi(\mathbf{I},\mathbf{C})}{d\mathbf{C}}|_{\mathbf{C}^l}(\mathbf{C}_1).
}

Here $\mathbf{C}_1$ could be intuitively understood as the increment near $\mathbf{C}^l$. We denote by $H_d$ the space of symmetric matrices of size $d \times d$. Using these notations, we prove that:
\begin{thm}
\label{thm1}
There exist diagonal matrices $\mathbf{S}^l$, constants $0<\gamma^l<1$ such that, $||J_\phi(\mathbf{S}^l\mathbf{C} \mathbf{S}^l)||_F \leq \gamma^l ||J_\phi(\mathbf{C})||_F \frac{||\mathbf{S}^l||_F}{d}$ for any fixed general $\mathbf{C}$. By general, we mean there exists a Haar measure on the collection of symmetric matrices $H_B$ with respect to which the statement fails has measure zero \footnote{$||\cdot||_F$ is the Frobenius norm, i.e. for a matrix $\mathbf{A}=(A_{ij})$, $||\mathbf{A}||_F^2 = \sum_{i,j}A_{ij}^2$.}.
\end{thm}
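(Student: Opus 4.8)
The plan is to treat both $J_\phi$ and the conjugation map $T_{\mathbf{S}^l}:\mathbf{C}\mapsto\mathbf{S}^l\mathbf{C}\mathbf{S}^l$ as \emph{linear} operators on the vector space $H_d$ of symmetric matrices, so that the desired estimate is a comparison of the two linear maps $J_\phi\circ T_{\mathbf{S}^l}$ and $J_\phi$ in Frobenius norm. First I would record the explicit form of $J_\phi$ at the BSB1 fixed point. Since $\phi$ is ReLU, $V_\phi(\mathbf{I},\mathbf{C})_{ij}$ is the degree-one arc-cosine kernel, an explicit function of $C_{ii},C_{jj}$ and $C_{ij}$ only; differentiating at $\mathbf{C}^l$ produces a linear map whose coefficients are constant on the diagonal positions and constant on the off-diagonal positions, because the fixed point $\mathbf{C}^*=q^*((1-c^*)\mathbf{I}+c^*\vec{1}\vec{1}^T)$ is invariant under simultaneous row/column permutations. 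This $S_d$-symmetry lets me block-diagonalize $J_\phi$ along the isotypic decomposition of $H_d$, so that $\|J_\phi(\cdot)\|_F$ is governed by a small, explicit set of eigenvalues.

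Next I would exploit the entrywise action $(T_{\mathbf{S}^l}\mathbf{C})_{ij}=s_i s_j C_{ij}$, so that $T_{\mathbf{S}^l}$ is diagonal in the entry basis. The central homogeneity observation is that $J_\phi(\mathbf{S}^l\mathbf{C}\mathbf{S}^l)$ is \emph{quadratic} in the overall scale of $\mathbf{S}^l$ while the right-hand factor $\|\mathbf{S}^l\|_F/d$ is \emph{linear}: replacing $\mathbf{S}^l$ by $\lambda\mathbf{S}^l$ multiplies the left side by $\lambda^2$ and the right side by $\lambda$. Hence shrinking the scale of $\mathbf{S}^l$ forces the left side strictly below the right side, which both supplies a constant $\gamma^l\in(0,1)$ and pins down the admissible magnitude of the reweighting. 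The cleanest explicit witness is $\mathbf{S}^l=s\mathbf{I}$ with $s$ slightly below $1/\sqrt d$: then $J_\phi(\mathbf{S}^l\mathbf{C}\mathbf{S}^l)=s^2 J_\phi(\mathbf{C})$ and the claim collapses to $s^2\le\gamma^l s/\sqrt d$, i.e. $\gamma^l=s\sqrt d<1$, holding for every $\mathbf{C}$ with no genericity needed. For a genuinely non-scalar $\mathbf{S}^l$ I would instead align its entries with the contracting isotypic components of $J_\phi$ found in the first step (those whose eigenvalue has magnitude below one), obtaining the same conclusion with a smaller $\gamma^l$; this is the version that actually explains why reweighting damps the covariance recurrence.

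The ``general $\mathbf{C}$'' clause then concerns exactly the non-scalar case: the bound can fail only on directions where $J_\phi(\mathbf{C})=0$ while $J_\phi(\mathbf{S}^l\mathbf{C}\mathbf{S}^l)\neq0$, i.e. on a subset of $\ker J_\phi$. Since the explicit arc-cosine derivative shows $J_\phi$ is a nonzero operator (except in degenerate regimes I would exclude), $\ker J_\phi$ is a proper linear subspace of $H_d$ and thus has Lebesgue/Haar measure zero, which is precisely the asserted genericity. I would close by noting that off this null set the generalized norm $\sup_{\mathbf{C}\notin\ker J_\phi}\|J_\phi T_{\mathbf{S}^l}\mathbf{C}\|_F/\|J_\phi\mathbf{C}\|_F$ is finite and, by the spectral and scaling analysis above, bounded by $\gamma^l\|\mathbf{S}^l\|_F/d$.

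I expect the main obstacle to be the first step: computing the Fréchet derivative $J_\phi$ of the arc-cosine kernel map at the BSB1 fixed point cleanly enough to read off its spectrum, and then verifying that the eigenvalues controlling $\|J_\phi(\mathbf{S}^l\mathbf{C}\mathbf{S}^l)\|_F$ relative to $\|J_\phi(\mathbf{C})\|_F$ genuinely move in the favorable direction for the chosen $\mathbf{S}^l$. The scaling argument sidesteps this and already yields existence, but the quantitatively meaningful non-scalar statement requires tracking how $T_{\mathbf{S}^l}$ redistributes mass across the isotypic components relative to $J_\phi$, and this representation-theoretic bookkeeping is the delicate part.
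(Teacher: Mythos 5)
Your central device---taking $\mathbf{S}^l=s\mathbf{I}$ with $s$ slightly below $1/\sqrt d$ so that the claim ``collapses to $s^2\le\gamma^l s/\sqrt d$''---proves only the statement as literally typeset, and that literal statement contains a normalization slip: the left side is quadratic in $\mathbf{S}^l$ while the factor $\|\mathbf{S}^l\|_F/d$ is linear. The paper's own proof makes clear the intended reading: it asserts the inequality is ``homogeneous of degree 2 with respect to $s_i$ on both sides'' and then normalizes $\sum_i s_i^2=d$, i.e.\ the intended factor is $\|\mathbf{S}^l\|_F^2/d$ (equal to $1$ on that sphere). Under this reading your witness fails outright: for $\mathbf{S}=s\mathbf{I}$ one has $\|J_\phi(\mathbf{S}\mathbf{C}\mathbf{S})\|_F=s^2\|J_\phi(\mathbf{C})\|_F$ while the right side becomes $\gamma\, s^2\|J_\phi(\mathbf{C})\|_F$, forcing $\gamma\ge 1$. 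Uniform rescaling can never beat the identity; the entire content of the theorem is that a \emph{non-uniform} diagonal reweighting, constrained to have the same Frobenius size as the identity, strictly contracts $\|J_\phi(\cdot)\|_F$ for generic $\mathbf{C}$. Your scaling trick sidesteps exactly the thing being claimed, and the part of your plan that would address the real statement (aligning a non-scalar $\mathbf{S}$ with contracting isotypic components of $J_\phi$ at the BSB1 fixed point) is left as a sketch, which you acknowledge.

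The paper's route is different and does not need the fixed point at all: it observes that $J_\phi$ is a ``diagonal-off-diagonal semidirect'' (DOS) operator, meaning $J_\phi(\mathbf{C})_{ii}=u c_{ii}$ and $J_\phi(\mathbf{C})_{ij}=v c_{ii}+v c_{jj}+w c_{ij}$, writes out both Frobenius norms explicitly in the entries $c_{ij}$ and $s_i$, and then studies the constrained minimization of $\|J_\phi(\mathbf{S}\mathbf{C}\mathbf{S})\|_F^2$ over the compact sphere $\sum_i s_i^2=d$. Equality (i.e.\ failure of strict improvement) forces $\mathbf{S}=\mathbf{I}$ to satisfy the Karush--Kuhn--Tucker conditions, and for $v\neq 0$ those conditions carve out an explicit codimension-one (quadric) subvariety of $H_d$; off that measure-zero set the minimum is strictly below $1$, which supplies $\gamma^l<1$. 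Note in particular that the exceptional set is \emph{not} $\ker J_\phi$ as you propose---on $\ker J_\phi$ both sides vanish and the inequality holds trivially---but rather the set of $\mathbf{C}$ for which the identity happens to be a constrained critical point of the reweighted norm. So your genericity analysis targets the wrong degenerate locus, and the representation-theoretic bookkeeping you defer is precisely where the paper instead deploys the elementary Lagrange-multiplier argument.
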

Detailed proofs and explanations are included in the Appendix \ref{app:p1},\ref{app:c1}.
For symmetric matrices, with $\lambda_i$ denoting eigenvalues of $\mathbf{A}$, we have: 
\beq{
||\mathbf{A}||_F^2 = \sum_{i=1}^d \lambda_i^2.
}

This norm measures the overall magnitude of eigenvalues of the operator. This result demonstrates that our method brings variance reduction and improves the stability of the updating algorithms. To summarize, for any input data, there exists a vector $\mathbf{s}$ that improves the stability of the updating algorithms.  

\vspace{-0.05in}
\subsection{Stability Measure for DrGCNs}
\vspace{-0.05in}
In this section we turn to define a quantified measurement of the improvement of the stability of DrGCNs.

\textbf{Define:}
\beq{
K \triangleq \frac{\sum_i c_{ii} s_{li}^{2} - \frac{1}{d} \sum_{i,j}c_{ij}s_{li}s_{lj}}{(\sum_i c_{ii} - \frac{1}{d} \sum_{i,j}c_{ij} )\times \frac{\sum_{i}s_{li}^2}{d}},
}
\vspace{0.05in}

where $c_{ij}$ is the $(i,j)^{th}$ element of the convariance matrix $\mathbf{C}$. Theorem \ref{thm2} suggests that K measures the instability of the update. The measure is a relative covariance measure that when $\mathbf{S}=\mathbf{I}$ (without Dr), $K=1$.  This quantity only involves entries of $\mathbf{C}$, and it is homogeneous of degree 0 with respect to these entries and invariant under scalar multiplication on these entries. Being the covariance matrix of $\mathbf{H}$, $\mathbf{C}$ does not change under the mean zero case of $\mathbf{H}$. Consequently, we could proceed our analyses under the dimensional normalized assumption without loss of generality. We turn to consider the dimensional normalized version of $V_\phi$ by replacing $\phi$ with $d_\phi$, which is $\phi$ with normalization:

\beq{
d_\phi: \mathbb{R}^d \rightarrow \mathbb{R}^d,
d_\phi(\mathbf{H}) = \phi(\frac{\sqrt{d}G \mathbf{S}^l \mathbf{H}}{||G\mathbf{H}||}),
}

where $G = \mathbf{I} - \frac{1}d \vec{1}\vec{1}^T$\footnote{$\vec{1}$ is the $d-$dimensional vector with all component 1.}, i.e. $Gx = x - \mu \vec{1}, \quad \mu = \frac{1}{d}\sum_i x_i$.
\begin{thm}
Near the fixed point $\mathbf{C}^*$ of $V_{d_\phi}$, the exponential growth rate of the deviation of $\mathbf{C}^l$ from $\mathbf{C}^*$ is proportional to $K$.
\label{thm2}
\end{thm}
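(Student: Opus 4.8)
The plan is to identify the exponential growth rate of the deviation with the dominant eigenvalue of the linearization of the normalized map $V_{d_\phi}$ at $\mathbf{C}^*$, and then show that its entire dependence on the reweighting $\mathbf{S}$ collapses into the factor $K$. The first step is a purely algebraic rewriting of $K$. Using $G = \mathbf{I} - \frac{1}{d}\vec{1}\vec{1}^T$ and the fact that $\mathbf{S}$ is the diagonal matrix of the $s_{li}$, one checks that $\sum_i c_{ii}s_{li}^2 - \frac{1}{d}\sum_{i,j}c_{ij}s_{li}s_{lj} = \mathrm{tr}(G\mathbf{S}\mathbf{C}\mathbf{S})$ and $\sum_i c_{ii} - \frac{1}{d}\sum_{i,j}c_{ij} = \mathrm{tr}(G\mathbf{C})$, so that
\[ K = \frac{\mathrm{tr}(G\mathbf{S}\mathbf{C}\mathbf{S})}{\mathrm{tr}(G\mathbf{C})\cdot \frac{1}{d}\mathrm{tr}(\mathbf{S}^2)} . \]
For $\mathbf{H}\sim N(0,\mathbf{C})$ these traces are centered second moments, $\mathrm{tr}(G\mathbf{S}\mathbf{C}\mathbf{S}) = E\|G\mathbf{S}\mathbf{H}\|^2$ and $\mathrm{tr}(G\mathbf{C}) = E\|G\mathbf{H}\|^2$, which is the form that will surface from the dynamics. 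This rewriting also exposes a key sanity check: $K$ is homogeneous of degree $0$ in $\mathbf{S}$, so $K=1$ for every uniform rescaling $\mathbf{S}=c\mathbf{I}$, not merely for $\mathbf{S}=\mathbf{I}$; thus $K$ is a pure anisotropy measure of the reweighting, exactly what a scale-free normalized dynamics should feel.

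Next I would write $\mathbf{C}^l = \mathbf{C}^* + \delta\mathbf{C}^l$ and linearize the recurrence $\mathbf{C}^l = V_{d_\phi}(\mathbf{C}^{l-1})$ to obtain $\delta\mathbf{C}^l \approx \mathcal{J}(\delta\mathbf{C}^{l-1})$, where $\mathcal{J} = \frac{dV_{d_\phi}}{d\mathbf{C}}\big|_{\mathbf{C}^*}$ acts on $H_d$; then $\|\delta\mathbf{C}^l\| \sim \lambda^l$ with $\lambda$ the dominant eigenvalue of $\mathcal{J}$, and the exponential growth rate is $\log\lambda$. The role of the normalization built into $d_\phi$ — division by $\|G\mathbf{H}\|$ and multiplication by $\sqrt{d}$, together with the centering $G$ — is precisely to freeze the radial (overall-scale) mode, so that the surviving dominant eigenvalue lives on the centered shape subspace. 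It is this eigenvalue that I would identify with the claimed growth rate, echoing the role played by $J_\phi$ in Theorem~\ref{thm1}.

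To evaluate $\lambda$ I would exploit positive homogeneity of ReLU to pull the scalar out of the nonlinearity, $d_\phi(\mathbf{H}) = \frac{\sqrt{d}}{\|G\mathbf{H}\|}\phi(G\mathbf{S}\mathbf{H})$, so that $V_{d_\phi}(\mathbf{C}) = E\big[\frac{d}{\|G\mathbf{H}\|^2}\phi(G\mathbf{S}\mathbf{H})\phi(G\mathbf{S}\mathbf{H})^T\big]$. The componentwise identity $E[\phi(X)^2]=\frac{1}{2}E[X^2]$ for mean-zero Gaussian $X$, together with the mean-field concentration $\|G\mathbf{H}\|^2 \approx E\|G\mathbf{H}\|^2 = \mathrm{tr}(G\mathbf{C})$, reduces the leading behavior of the map to the ratio $\frac{\mathrm{tr}(G\mathbf{S}\mathbf{C}\mathbf{S})}{\mathrm{tr}(G\mathbf{C})}$; carrying the same reduction through the linearization $\mathcal{J}$ gives $\lambda$ proportional to this ratio. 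The $\sqrt{d}$ factor in $d_\phi$ is what supplies the reference scale $\frac{1}{d}\mathrm{tr}(\mathbf{S}^2)$ injected by $\mathbf{S}$ itself, and dividing it out assembles the whole $\mathbf{S}$-dependence into exactly $K$, with the $\mathbf{S}$-independent spectral data of the ReLU kernel at $\mathbf{C}^*$ absorbed into the proportionality constant; $\mathbf{S}=\mathbf{I}$ recovers $K=1$.

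The hard part will be Step~3. The expectation couples the normalization denominator $\|G\mathbf{H}\|^2$ to the numerator $\phi(G\mathbf{S}\mathbf{H})\phi(G\mathbf{S}\mathbf{H})^T$, so replacing $\|G\mathbf{H}\|^2$ by its mean demands a genuine concentration and decoupling argument that is only valid in the large-$d$ mean-field regime, and differentiating $V_{d_\phi}$ through both $\mathbf{C}^{1/2}$ and the ReLU is a Price / Gaussian-integration-by-parts computation where the delicate algebra lives. A secondary obstacle is pinning down which eigenvalue of $\mathcal{J}$ deserves the name ``exponential growth rate'': the BSB1 fixed point carries a structured spectrum, and one must argue that the dominant centered-subspace mode is the one whose eigenvalue equals the claimed ratio and that the proportionality constant is genuinely independent of $\mathbf{S}$.
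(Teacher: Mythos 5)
Your Step 1 is correct and is in fact identical to the paper's central computation: $\langle \mathbf{S}\mathbf{C}\mathbf{S}, G\rangle = \mathrm{tr}(G\mathbf{S}\mathbf{C}\mathbf{S}) = \sum_i c_{ii}s_{li}^2 - \frac{1}{d}\sum_{i,j}c_{ij}s_{li}s_{lj}$, so $K$ is the normalized Frobenius projection of the reweighted covariance onto $G$. Your Step 2 (linearize at $\mathbf{C}^*$, growth governed by $\mathcal{J}$) also matches the paper's setup. The genuine gap is Step 3, which you yourself flag as ``the hard part'': you propose to compute the dominant eigenvalue of $\mathcal{J}$ directly from Gaussian integrals, using $E[\phi(X)^2]=\frac{1}{2}E[X^2]$ plus concentration of $\|G\mathbf{H}\|^2$, and to assert that this eigenvalue reduces to the ratio $\mathrm{tr}(G\mathbf{S}\mathbf{C}\mathbf{S})/\mathrm{tr}(G\mathbf{C})$. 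That assertion is exactly the content of the theorem, and the sketch does not support it: the identity $E[\phi(X)^2]=\frac{1}{2}E[X^2]$ controls only diagonal second moments, whereas the off-diagonal entries of $V_{d_\phi}$ and its derivative are governed by the arc-cosine kernel, whose linearized spectrum is genuine operator data (the eigenvalues $\lambda_G,\lambda_L,\lambda_M$ of Theorem~\ref{thm3}) and not a trace ratio. Moreover, the dominant eigenvalue $\lambda_G$ of $J_{d_\phi}$ itself does not depend on $\mathbf{S}$ at all; the $\mathbf{S}$-dependence enters only through how the map $\delta \mapsto \mathbf{S}\delta\mathbf{S}$ feeds a perturbation into the unstable direction --- which is precisely the projection you computed in Step 1 and then never used.

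The paper closes this gap not by computing the spectrum but by importing it: Theorem~\ref{thm3} (Theorem 3.6 of Yang et al.) says $J_{d_\phi}$ at the BSB1 fixed point has orthogonal eigenspaces $V_0$, $V_G=\mathbb{R}G$, $V_L$, $V_M$ with $\lambda_L,\lambda_M<1$ and $\lambda_G>1$ the \emph{unique} unstable eigenvalue. Decomposing the reweighted deviation orthogonally as $\mathbf{S}\mathbf{C}\mathbf{S} = \mathbf{C}_0+\mathbf{C}_G+\mathbf{C}_L+\mathbf{C}_M$, all components except $\mathbf{C}_G$ decay under iteration, and the amplitude of the exponentially growing component is exactly $\langle\mathbf{S}\mathbf{C}\mathbf{S},G\rangle$, i.e.\ proportional to $K$ by your Step 1. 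So the missing proof is a two-line combination of what you already computed with the known eigendecomposition, whereas your proposed replacement (re-deriving the dominant mode from scratch via concentration and Price-type differentiation) is neither carried out nor pointed in the right direction --- it would, at best, amount to re-proving Theorem~\ref{thm3}. Relatedly, what you call a ``secondary obstacle,'' namely identifying which eigenvalue deserves the name ``exponential growth rate,'' is not secondary: it is the crux, and it is resolved precisely by $\lambda_G$ being the only eigenvalue exceeding $1$, so that the $V_G$ projection alone dictates the growth of the deviation.
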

Here $\mathbf{C}^*$ is used to denote the BSB1 fixed points of $V_{d_\phi}(\mathbf{I},\mathbf{C})$ \footnote{All results involve the BSB1 fixed point~\citep{yang2019mean} require permutation, diagonal and off-diagonal symmetry, and hold for dimensional normalization, too.}. Since $\mathbf{S}^l\mathbf{H}$ has covariance matrix $\mathbf{S}^l\mathbf{C} \mathbf{S}^l$, our scaling effect is that $V_{d_\phi}(\mathbf{S},\mathbf{C}) = V_{B_\phi}(\mathbf{I},\mathbf{S}^l\mathbf{C} \mathbf{S}^l)$. We use the following definitions to simplify notations. 

\textbf{Define:}
\beq{
\mathbf{C}_G\triangleq G \mathbf{C} G^T,  K(\mathbf{S},\mathbf{C}) \triangleq \mathbf{S}\mathbf{C} \mathbf{S}.
}

Theorem 3.6 of~\citep{yang2019mean} tells us the derivative of  $V_{d_\phi}(\mathbf{I},\mathbf{C})$ (as a linear map) has a very explicit eigenspace decomposition, we describe it in Theorem \ref{thm3}. A simple reflection suggests that our linear operators still satisfy the DOS condition and the ultrasymmetry condition needed in the proof of this theorem, so this decomposition still holds.

\begin{thm}
\label{thm3}
$J_{d_\phi} := \frac{d V_{d_\phi}}{d \mathbf{C}}$ at $\mathbf{C}^*$ has eigenspaces and eigenvalues:\\
1. $V_0 = \{\mathbf{C}_0: \mathbf{C}_G = 0 \}$, with eigenvalue 0. \footnote{ $G,\mathbf{S}$ above are symmetric, so the transpose is only introduced for the sake of notational balance}.\\
2. $V_G = \mathbb{R}G$, with eigenvalue $\lambda_G$.\\
3. A $(d-1)$ dimensional eigenspace $V_L$. = $\{D^G: D \text{ diagonal, } trD = 0\}$.\\
4. A $\frac{d(d-3)}{2}$-dimensional eigenspace $V_M = \{ \mathbf{C}: \mathbf{C}_G = \mathbf{C}, \, diag~\mathbf{C}=0 \}$ with eigenvalue $\lambda_M$.\\
$\lambda_L,\lambda_M<1$, whereas $\lambda_G>1$.
\end{thm}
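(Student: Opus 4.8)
The plan is to prove Theorem~\ref{thm3} by transferring the eigenspace decomposition of Theorem~3.6 of~\citep{yang2019mean} to our dimensionally-normalized operator, so the real content is not to rederive the decomposition from scratch but to certify that the two structural hypotheses behind it --- permutation (ultra)symmetry and the diagonal/off-diagonal (DOS) condition --- survive the extra centering and normalization built into $d_\phi$. Throughout I take the first slot to be $\mathbf{I}$, since $J_{d_\phi}$ is the derivative of $V_{d_\phi}(\mathbf{I},\cdot)$; the scaling $\mathbf{S}^l$ is reinstated afterward through the reduction $V_{d_\phi}(\mathbf{S},\mathbf{C}) = V_{d_\phi}(\mathbf{I},\mathbf{S}^l\mathbf{C}\mathbf{S}^l)$ and so plays no role here, the relevant operator being $d_\phi(\mathbf{H}) = \phi(\sqrt d\, G\mathbf{H}/\|G\mathbf{H}\|)$.

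First I would establish that $V_{d_\phi}(\mathbf{I},\cdot)$ is equivariant under the symmetric group $S_d$ acting by simultaneous row/column permutation, i.e.\ $V_{d_\phi}(\mathbf{I},P\mathbf{C}P^T) = P\,V_{d_\phi}(\mathbf{I},\mathbf{C})\,P^T$ for every permutation matrix $P$. This reduces to three elementary facts: the centering matrix $G = \mathbf{I} - \tfrac1d\vec1\vec1^T$ satisfies $PGP^T = G$ because $P\vec1 = \vec1$; the scalar $\|G\mathbf{H}\|$ is permutation invariant; and the coordinatewise ReLU commutes with $P$. Since $\mathbf{H}\sim N(0,\mathbf{C})$ pushes forward to $N(0,P\mathbf{C}P^T)$ under $\mathbf{H}\mapsto P\mathbf{H}$, a change of variables inside the defining expectation yields equivariance. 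Differentiating at the BSB1 fixed point $\mathbf{C}^*$, which is itself $S_d$-invariant, then shows that $J_{d_\phi}$ commutes with conjugation by every $P$.

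Next I would decompose the space $H_d$ of symmetric matrices into its $S_d$-isotypic components. As $\mathbb{R}^d$ splits as trivial $\oplus$ standard, $\mathrm{Sym}^2(\mathbb{R}^d)\cong H_d$ splits into exactly the four summands in the statement: the kernel $V_0$ of the double centering $\mathbf{C}\mapsto G\mathbf{C}G$, of dimension $d$ and spanned by matrices $\vec1 u^T + u\vec1^T$; the trivial line $V_G = \mathbb{R}G$; the standard-type piece $V_L$ of centered traceless diagonal matrices, of dimension $d-1$; and the doubly-standard piece $V_M$ of centered traceless off-diagonal matrices, of dimension $\tfrac{d(d-3)}2$. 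The count $d + 1 + (d-1) + \tfrac{d(d-3)}2 = \tfrac{d(d+1)}2$ confirms these exhaust $H_d$. By Schur's lemma, the equivariant map $J_{d_\phi}$ acts as a scalar on each isotype occurring with multiplicity one, immediately yielding eigenvalues $\lambda_L$ on $V_L$ and $\lambda_M$ on $V_M$.

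The one delicate point is the trivial isotype, which occurs with multiplicity two --- spanned by $\{\mathbf{I},\vec1\vec1^T\}$, equivalently by $\{G,\vec1\vec1^T\}$ --- so equivariance alone does not diagonalize that block. Here I would invoke the DOS and ultrasymmetry conditions: the double centering sends $\vec1\vec1^T$ to $0$ (as $G\vec1 = 0$), placing $\mathbb{R}\vec1\vec1^T\subset V_0$ with eigenvalue $0$ and leaving $G$ a genuine eigenvector of eigenvalue $\lambda_G$. I would check both conditions hold verbatim for $d_\phi$; they are statements about how the Gaussian integral factorizes over diagonal versus off-diagonal entries, and they are untouched by the $G$-normalization because centering and the scalar $\|G\mathbf{H}\|$ preserve both the permutation and the diagonal/off-diagonal structure --- this is the ``simple reflection'' referenced in the text. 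Finally I would import the explicit eigenvalue formulas of~\citep{yang2019mean} evaluated at $\mathbf{C}^*$ to obtain $\lambda_L,\lambda_M < 1 < \lambda_G$, where $\lambda_G>1$ is the normalization-induced expansion along the correlation direction and $\lambda_L,\lambda_M<1$ reflect contraction on the traceless directions. The main obstacle is exactly this last structural verification: certifying that the $G$-normalized ReLU operator still satisfies the DOS and ultrasymmetry hypotheses so that the $\mathbf{I}$-versus-$\vec1\vec1^T$ degeneracy resolves as claimed and the Yang 2019 eigenvalue formulas transfer without modification.
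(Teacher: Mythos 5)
Your proposal follows essentially the same route as the paper: the paper gives no independent derivation of Theorem~\ref{thm3}, but simply observes that the DOS and ultrasymmetry (permutation) conditions required for Theorem~3.6 of \citep{yang2019mean} survive the centering and normalization built into $d_\phi$, and then imports the decomposition together with the eigenvalue bounds $\lambda_L,\lambda_M<1<\lambda_G$ wholesale. Your equivariance check ($PGP^T=G$ since $P\vec{1}=\vec{1}$, permutation invariance of $\|G\mathbf{H}\|$, coordinatewise ReLU commuting with $P$, and the $S_d$-invariance of the BSB1 fixed point $\mathbf{C}^*$) is precisely the verification the paper compresses into ``a simple reflection,'' so your writeup is a more self-contained version of the same argument rather than a different one.

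One imprecision is worth repairing. Schur's lemma does not ``immediately'' yield $V_L$ as an eigenspace, because the standard representation, like the trivial one, occurs with multiplicity two in $H_d$: once inside $V_0$ (spanned by $\vec{1}u^T+u\vec{1}^T$ with $u\perp\vec{1}$) and once as $V_L$. Only $V_M$, of type $(d-2,2)$, occurs with multiplicity one, so equivariance alone diagonalizes only that block. The fix is already implicit in your own setup: since $d_\phi(\mathbf{H})$ depends on $\mathbf{H}$ only through $G\mathbf{H}$, and $G\mathbf{H}\sim N(0,G\mathbf{C}G)$, the map $V_{d_\phi}(\mathbf{I},\cdot)$ factors through $\mathbf{C}\mapsto G\mathbf{C}G$; hence $J_{d_\phi}$ vanishes on \emph{all} of $V_0$, not merely on $\mathbb{R}\vec{1}\vec{1}^T$, resolving the degeneracy in both the trivial and the standard isotypes at once. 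On the complementary $G$-centered subspace every isotype then has multiplicity one, and Schur's lemma legitimately assigns scalars to $V_G$, $V_L$, and $V_M$. With that one-line correction your argument is complete, and the only remaining imported content is the evaluation $\lambda_L,\lambda_M<1<\lambda_G$ at $\mathbf{C}^*$ from \citep{yang2019mean}, exactly as in the paper.
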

So an appropriately chosen $\mathbf{S}$ can reduce the proportion that lies in $V_G$. We prove that the Frobenius norm of the component in $V_G$ is proportional to $K$ in Appendix \ref{app:c1}. Thus, it is natural to consider the orthogonal (in terms of Frobenius norm and corresponding inner product) eigendecomposition (with subindices indicating the corresponding eigenspaces we listed above):

\beq{
K(\mathbf{S},\mathbf{C}) = \mathbf{C}_0 + \mathbf{C}_G + \mathbf{C}_L + \mathbf{C}_M.
}

The effect of Dr is to reduce the $\mathbb{R}G-$component at each step to make the dynamic system more stable. Since the decomposition is orthogonal, this is equivalent to reducing

\beq{
G_l : = <\mathbf{S}^l\mathbf{C} \mathbf{S}^l,G>,
}

recall that $G =\mathbf{I} - \frac{1}d \vec{1}\vec{1}^T$, i.e. $Gx = x - \mu \vec{1}, \quad \mu = \frac{1}{d}\sum_i x_i$. Since we take the normalization assumption, only the relative magnitude of $s_{li}$ matters, and we can put any homogeneous restriction. In order to include the case $s_{li}=1$, we consider the restriction $\sum_{i=1}^ds_{li}^{2}=d$. By definition $\mathbf{C}_{ij} = E[h_ih_j]$, hence we have

\beq{
    \begin{split}
<\mathbf{S}^l\mathbf{C} \mathbf{S}^l,G> & = Tr(\mathbf{S}^l\mathbf{C} \mathbf{S}^l(\mathbf{I}-\frac{1}{d}(\vec{1}\vec{1}^T)^T) \\
& = Tr(\mathbf{S}^l\mathbf{C} \mathbf{S}^l) - \frac{1}{d}Tr(\mathbf{S}^l\mathbf{C} \mathbf{S}^l\vec{1}\vec{1}^T) \\
& = \sum_i c_{ii} s^2_{li} - \frac{1}{d} \sum_{i,j}c_{ij}s_{li}s_{lj}. \\              \end{split}
}

Finally we come to the effectiveness measure of the proposed Dr Block.

\beq{
\label{eq:k}
K = \frac{\sum_i c_{ii} s_{li}^{2} - \frac{1}{d} \sum_{i,j}c_{ij}s_{li}s_{lj}}{(\sum_i c_{ii} - \frac{1}{d} \sum_{i,j}c_{ij} )\times \frac{\sum_{i}s_{li}^2}{d}}.
}

The denominator is chosen to display the ratio of variance reduction for the proposed Dr Block. Without Dr, $s_{li}=1$ for all $i$, and we have $K=1$. 
From our calculation on the inner product, it can be discovered that this quantity is proportional to the part in $V_G$ in the orthogonal decomposition, this proves Theorem \ref{thm2}. Since this is the only part for $J_{d_\phi}$ with eigenvalue larger than 1, the exponential growth rate is proportional to this quantity. Therefore, this quantity measures the magnitude of improvement Dr Blocks make to the stability of the learning process under perturbation.

\vspace{-0.1in}
\section{Experiments}
\label{sec:exp}
In this section, we evaluate the proposed DrGCNs on a variety of datasets compared to several SOTA methods. Detailed descriptions of the experiments and datasets are included in Appendix \ref{app:imp},\ref{app:dat}.  
\vspace{-0.05in}
\subsection{Experimental Settings}
\vspace{-0.07in}
\vpara{Datasets}
We present the performance of DrGCNs on several public benchmark node classification datasets, including Pubmed~\citep{yang2016revisiting}, Reddit, PPI~\citep{hamilton2017inductive}. We also conduct experiments on a large-scale real-world commercial recommendation A* dataset. Table \ref{tab:data} summarizes statistics of the datasets.

There are also two widely adopted Cora and Citeseer datasets~\citep{yang2016revisiting} for citation networks. We investigate the originality of these datasets not only from the public data provided by ~\citep{yang2016revisiting} but also from much earlier versions~\citep{mccallum2000automating,lu2003link} and find problems in those datasets. 
32(1.2\%) samples in Cora and 161(4.8\%) samples in Citeseer are duplicated, while 1,145(42.3\%) samples in Cora and 2,055(61.8\%) samples in Citeseer have information leak that includes their labels directly as a dimension of their features. To address such problems, we remove duplicated samples, modify their features using word and text embeddings to reduce information leak, and construct two refined datasets, CoraR and CiteseerR. For details of these refined datasets and A* dataset, please refer to Appendix \ref{app:problem},\ref{app:ref},\ref{app:a}. 

\vpara{Competitors}
We compare the results of our DrGCN with GCN~\citep{kipf2017semi}, GAT~\citep{velivckovic2018graph}, MixHop~\citep{abu2019mixhop}, GraphSAGE~\citep{hamilton2017inductive}, FastGCN~\citep{chen2018fastgcn} and ASGCN~\citep{huang2018adaptive} on citation networks including CoraR, CiteseerR and Pubmed. We also provide results on the original Cora and Citeseer dataset in Appendix \ref{app:imp}. Our DrGCN also works for inductive datasets that we evaluate the Dr-GAT with several state-of-the-art methods on the PPI dataset, including GraphSAGE~\citep{hamilton2017inductive}, LGCN~\citep{gao2018large}, and GeniePath~\citep{liu2018geniepath}.  As for A* dataset, we compare our method with the company's previous best GraphSAGE model, see Appendix \ref{app:a}.

\begin{table*}
    \vspace{-0.3in}
	\caption{Dataset statistics. }
	\vspace{0.1in}  
	\label{tab:data}
	\begin{tabular}{ccccccc}
		\toprule
		~&CoraR & CiteseerR & Pubmed & PPI & Reddit & A* \\
		\midrule
		Nodes/Users  & 2,680 & 3,191 & 19,717 &56,944 & 232,965 & 35,246,808\\
		Edges  & 5,148 & 4,172 & 44,324 &818,716&11,606,919 & 129,834,116\\
		Classes/Items & 7     & 6     & 3      &121&41  & 6,338,428\\
		Features& 302 & 768 & 500   &50&602 & 27\\
		Training Nodes & 1,180 & 1,691 & 18,217 & 44,906&152,410& 35,246,808\\
		Validation Nodes & 500 & 500 & 500 & 6,514&23,699& -\\
		Test Nodes & 1,000 & 1,000 & 1,000 & 5,524&55,334 & 35,246,808
		\\
		\bottomrule
	\end{tabular}
	\vspace{-0.15in}
\end{table*}

\vpara{DrGCNs} We combine Dr block with five most representative GCN methods and compare with them on public datasets. Two full GCN methods include the vanilla GCN~\citep{kipf2017semi} and a variant that exploits an attention aggregator GAT~\citep{velivckovic2018graph}. Sampling GCN methods contain FastGCN~\citep{chen2018fastgcn}, Adaptive Sampling GCN~\citep{huang2018adaptive}, and A* company's heterogeneous GraphSAGE model on A* dataset. Every GCN layer is replaced by a DrGCN layer as in Equation (\ref{eq:drgcn}). Further implementation details are covered in Appendix \ref{app:imp}.
\vspace{-0.05in}
\subsection{Results and Analyses}
\vspace{-0.05in}
Table \ref{tab:res_1} illustrates the performance of DrGCNs on four transductive datasets when combined with four different variations of GCN models. Our results are averaged among 20 runs with different random seeds. Our DrGCNs achieve superior performances on all of the datasets and demonstrate relatively significant improvement on the Reddit dataset. Dr-ASGCN even reduces the error rates by more than $40\%$ ($3.6\%\rightarrow 2.1\%$), compared with previous state-of-the-art method ASGCN. 

The performance improvements can be explained by our stability measure proposed in Equation (\ref{eq:k}). Theoretically, when $K\approx 1$, we expect Dr block to have limited ability in refining the representation, while when $K\ll1$ we expect the vector to strengthen the stability of the model by reducing the magnitude of the derivatives of the covariance matrix and improve the performance. To verify the theoretical analyses, we collect the average $K$-value of the learnt reweighting vectors for different layers in the Dr-ASGCN model, see Table \ref{tab:k}. The K-value in the second layer is around 1 on all datasets. However, the K-value for the first layer is around 1 for citation datasets, but 0.32 on the Reddit dataset, which emphatically explains why the DR-ASGCN achieves such a massive improvement on the Reddit dataset. 

On the inductive PPI dataset (Table \ref{tab:induct}), Dr Block increases the micro f1-score of GAT by $1.5\%$ and outperforms all previous methods. 
Table \ref{tab:A*} suggests that the Dr method can also accomplish substantial improvements on the real-world, large-scale recommendation dataset. It demonstrates improvement on industrial measure recall@50, which is the rate of users clicking the top 50 predicted items among 6 million different items within the next day of the training set, from $5.19\%$ (previous best model) to $5.26\%$ (Dr Block added).
\begin{table*}
    \vspace{-0.3in}
	\caption{Summary of classification accuracy on public transductive datasets(\%).}
	\vspace{0.1in}
	\label{tab:res_1}
	\centering
	\begin{tabular}{c|ccccc}
		\toprule
		Category &  Method &CoraR & CiteseerR & Pubmed & Reddit    \\
		\cmidrule{1-6} 
	    \multirow{3}*{\small Full GCNs}& GCN  &85.9$\pm$0.5  & 74.9$\pm$ 0.7 &88.0$\pm$ 0.3 & - \\
	    & GAT  &86.9$\pm$ 0.4 & 76.5$\pm$ 0.4 & 85.0$\pm$ 0.2&-\\
	    & Mix-Hop&85.9$\pm$ 0.5&75.3$\pm$ 0.3&88.1$\pm$ 0.2&-\\
	    \cmidrule{1-6}
	    \multirow{3}*{\makecell{\small Sampling-based\\ GCNs}} 
	    &GraphSage &84.9$\pm$ 0.5  & 70.6$\pm$ 0.8 & 84.2$\pm$ 0.3 & 94.8$\pm$ 0.1    \\
	    & FastGCN  &81.7$\pm$ 0.4  & 74.7$\pm$ 0.6& 88.3$\pm$ 0.4 & 92.5$\pm$ 0.2    \\
	    & ASGCN  & 84.3$\pm$ 0.7 & 75.1$\pm$ 0.9 & 89.8$\pm$ 0.3 & 96.4$\pm$ 0.3 \\
		\cmidrule{1-6}
		
		\cmidrule{1-6}
		\multirow{2}*{\small DrGCNs}& Dr-GCN &86.8$\pm$ 0.5&\textbf{77.5$\pm$ 0.6}&88.4$\pm$ 0.3&- \\
		& Dr-GAT &\textbf{87.3$\pm$ 0.2} &77.0$\pm$ 0.3&84.7$\pm$ 0.4&- \\
		\cmidrule{1-6}
		\multirow{2}*{\makecell{\small DrGCNs(sampling based)}}&Dr-FastGCN   &81.6$\pm$ 0.5&75.5$\pm$ 0.5&88.2$\pm$ 0.3& 94.0$\pm$ 0.1  \\
		&Dr-ASGCN  &84.3$\pm$ 0.5 &75.4$\pm$ 0.4&\textbf{90.3$\pm$ 0.4}&\textbf{97.9$\pm$ 0.1}   \\
		\bottomrule
	\end{tabular}
\end{table*}
\begin{table*}[t]
   \vspace{-0.1in}
	\caption{Summary of performance (micro F1) on inductive PPI dataset.}
	\vspace{0.05in}
	\label{tab:induct}
	\centering
	\begin{tabular}{@{\;}c|@{\;}cccccc@{\;}|c@{\;}}
		\toprule
		Method & GraphSAGE &LGCN &GeniePath&GAT&Dr-GAT  \\
		\midrule
		micro F1& 61.2& 77.2 & 97.9 & 97.3  & \textbf{98.8$\pm$ 0.1}\\
		\bottomrule
	\end{tabular}
	\vspace{-0.1in}
\end{table*}
\begin{table*}[t]
   \vspace{-0.3in}
	\caption{Performance (Recall@50) on A* online recommendation system. GraphSAGE refers to A* company's best heterogeneous GraphSAGE model.}
	\vspace{0.1in}
	\label{tab:A*}
	\centering
	\begin{tabular}{c|c|c}
		\toprule
		Method & GraphSAGE & DrGraphSAGE  \\
		\midrule
		Recall@50(\%)& 5.19 & \textbf{5.26}  \\
		\bottomrule
	\end{tabular}
\end{table*}
\begin{table*}[t]
\centering
    \vspace{-0.1in}
	\caption{Classification accuracy of ASGCN and Dr-ASGCN model, with average K value learnt for each layer. }
	\vspace{0.1in}
	\label{tab:k}
	
	\begin{tabular}{ccccccc}
		\toprule
		Method & Cora & Citeseer & CoraE & CiteseerE & Pubmed & Reddit   \\
		\midrule
	    ASGCN & 87.23 & 78.95 & 84.30 &75.12&89.82& 96.37 \\
		Dr-ASGCN  & 87.07 & 79.06 &84.34 &75.44&90.34 &97.95\\
		\midrule
		Improvement Rate(\%) & -0.16 & 0.11 & 0.04& 0.32&0.52 & 1.58 \\
		Learnt K-value(Layer 1) & 1.04 & 1.01 &0.90&0.95& 0.98 & 0.32\\
		Learnt K-value(Layer 2) & 1.00 & 1.00 &0.99& 1.04&0.98 &  1.14 \\
		\bottomrule
	\end{tabular}
\end{table*}
\begin{table*}[!htbp]
\centering
\vspace{-0.1in}
	\caption{Accuracy and average training time per epoch for plain, batchnorm, layernorm and DrASGCN methods on Reddit dataset. }
	\vspace{0.1in}
	\label{tab:comp}
	\begin{tabular}{c|ccc|cc}
		\toprule
		Method & ASGCN&Batch-norm &Layer-norm& Dr&Dr+LN \\
		\midrule
	    Accuracy(\%)& 96.37$\pm$ 0.22&96.99$\pm$ 0.15&97.68$\pm$ 0.15&97.95$\pm$ 0.13 &\textbf{98.02$\pm$ 0.12}\\
	    Time(s/epoch) &17.99 & 17.90&18.74 & 17.46 & 17.93\\
		\bottomrule
	\end{tabular}
	\vspace{-0.1in}
\end{table*}
\vspace{-0.05in}
\subsection{Batch-norm and Layer-norm}
\vspace{-0.05in}
We also compare DrGCNs with other feature refining methods, including Batch-Norm~\citep{ioffe2015batch} and Layer-Norm~\citep{lei2016layer}. These methods use variance information on every single dimension to refine representations, while DrGCN joins information on each dimension and learns a reweighting vector $\mathbf{S}$ adaptively. We provide results of DrGCN and these methods on the Reddit dataset for ASGCN( Table \ref{tab:comp}). Batch-Norm and Layer-Norm also improve the performance of the ASGCN model on the Reddit dataset. Combining Dr and Layer-norm yields an even better result for ASGCN on Reddit. More detailed results are in Appendix \ref{app:bl}.
%
\vspace{-0.1in}
\section{Related Works} 
\vspace{-0.1in}
The idea of using neural networks to model graph-based data can be traced back to Graph Neural Network~\citep{scarselli2009graph}, which adopts a neural network structure on graph structure learning. GCN~\citep{kipf2017semi} proposes a deep-learning-based method to learn node representations on a graph using gathered information from the neighborhood of a node. 
GraphSAGE~\citep{hamilton2017inductive} formulated a sample and aggregation framework of inductive node embedding. The idea of the sample and aggregation framework is to incorporate information from the neighborhood to generate node embeddings. 
Despite being uniform when first being proposed, both sampling and aggregation can be weighted. These methods, including FastGCN~\citep{chen2018fastgcn}, GAT~\citep{velivckovic2018graph}, LGCN~\citep{gao2018large}, ASGCN~\citep{huang2018adaptive}, GaAN~\citep{zhang2018gaan}, and Mix-Hop~\citep{abu2019mixhop}, treat all nodes in the graph unequally and try to figure out more important nodes and assign them higher weights in sampling and aggregation procedure.

Feature imbalance phenomena have long been aware of.~\citep{blum1997selection} Different dimensions of the hidden representation generated by neural networks may also share such imbalance behavior. The idea of refining hidden representations in neural networks can be traced back to Network in Network~\citep{lin2013network}, whom proposes a fully-connected neural network to refine the pixel-wise hidden representation before each convolutional layer--known as the $1\times 1$ convolution layer which is widely adopted in modern convolutional neural networks. 
Squeeze and Excitation Networks~\citep{hu2018squeeze} proposes a dimensional reweighting method called Squeeze and Excitation block, which involves the techniques of global average pooling and encoder-decoder structure. It works well in computer vision CNNs and wins the image classification task of Imagenet 2017. The success attracts our concern that such dimensional reweighting methods might also be useful in node representation learning on graphs.

Another natural idea to refine representations of neural networks is normalization. Batch normalization~\citep{ioffe2015batch} is a useful technique in neural networks to normalize and reduce the variance of input representations. Layer normalization~\citep{lei2016layer} is an improved version of BatchNorm, for it also works on a single sample.

Many also try to give theoretical analyses to such normalization techniques. \citep{kohler2018towards} explains the efficiency of batch normalization in terms of convergence rate. \citep{bjorck2018understanding} shows that batch normalization enables larger learning rates. \citep{yang2019mean} demonstrates the gradient explosion behaviors of batch normalization on fully-connected networks via mean field theory~\citep{kadanoff2009more}. In our approach, we adopt some of these methods and apply them to the analysis of DrGCNs. 
\vspace{-0.1in}
\section{Conclusion} \vspace{-0.1in}
We propose \textbf{D}imensional \textbf{r}eweighting \textbf{G}raph \textbf{C}onvolutional \textbf{N}etworks (DrGCNs) and prove that DrGCNs can improve the stability of GCN models via mean field theory. Further explorations lead to the proposal of a new measure K to evaluate the effectiveness of DrGCNs. We conduct experiments on various benchmark datasets and compare DrGCNs with several GCN variations. Experimental results not only prove the efficiency of DrGCNs, but also support the theoretical analysis on measure K. DrGCNs' usefulness is likewise verified on large-scale industrial dataset A*.

\clearpage
\newpage
\bibliographystyle{plainnat}
\bibliography{reference}
\newpage

\begin{appendices}
\section{Problems of Cora and Citeseer dataset}
\label{app:problem}
We investigate the originality of the Cora and CiteSeer dataset. The two datasets are widely used for being light-weighted and easy to handle. The most popular version is provided by Planetoid~\citep{yang2016revisiting}. The two datasets are both citation networks where each node represents a research paper, and each edge represents a citation relationship between two papers. Edges are directed but are usually handled undirectedly by GCN methods. Each paper belongs to a sub-field in computer science and is marked as its label. Papers have features of bag-of-word(BOW) vectors that each dimension represents whether the document of the paper contains a particular word in the dictionary or not. Cora has 2,708 papers with a dictionary size of 1,433, while Citeseer has 3,327 papers with a dictionary size of 3,703.
\subsection{Cora}
Cora originates in~\citep{mccallum2000automating}\footnote{\url{https://people.cs.umass.edu/~mccallum/data/cora.tar.gz}} with extracted information(including titles, authors, abstracts, references, download links etc.) in plain-text form. Those download links are mostly unavailable now. Before they become unavailable, ~\citep{lu2003link}\footnote{\url{https://linqs-data.soe.ucsc.edu/public/lbc/cora.tgz}} extracts a subset of 2,708 papers and assigns labels and BOW feature vectors to the papers. The dictionary is chosen from words(after stemming)\footnote{Stemming basically transforms a word into its original form.} that occur 10 or more times in all papers and result in a dictionary size of 1,433. Planetoid~\citep{yang2016revisiting} reordered each node to form the benchmark Cora dataset~\citep{yang2016revisiting}. 

There exist a lot of duplicated papers (one paper appears as multiple identical papers in the dataset) in the original Cora of~\citep{mccallum2000automating}, and~\citep{lu2003link} inherits the problem of duplicated papers. In Cora, we find 32 duplicated papers among the 2,708 papers. 
Another problem is the information leak. The generation process of the dictionary chooses words that occur more than 10 times, and does not exclude the label contexts of papers. Therefore, some papers may be classified easily only by looking at their labels. For instance,  $61.8\%$ of papers labeled "reinforcement learning" contain exactly the word "reinforcement" "learning" in their title and abstract(after stemming). Altogether 1,145($42.3\%$) of these papers contain their label as one or some of the dimensions of their features. \footnote{If we include references of the paper the rate becomes $86.1\%$, see what GCN learns!}
\subsection{CiteSeer}
CiteSeer is a digital library maintained by PSU(Currently named as CiteSeerX\footnote{\url{http://citeseerx.ist.psu.edu}}), which provides a URL based web API for each paper according to their doc-ids. ~\citep{lu2003link}\footnote{\url{https://linqs-data.soe.ucsc.edu/public/lbc/citeseer.tgz}} extracts 3,327 nodes from the library to form a connected citation network in which 3,312 nodes are associated with a BOW vector of dictionary size 3,703. The rest 15 nodes' features are padded with zero vectors. It is also reordered and adopted by Planetoid~\citep{yang2016revisiting}. 

Although the original version of Citeseer only consists of links that are unavailable now, a back-up version of Citeseer contains the title and abstract information of 3,186 of the papers. \footnote{\url{https://www.cs.uic.edu/~cornelia/russir14/lectures/citeseer.txt}} We also find another 81 by author and year information using Google Scholar. Unfortunately, among the papers we collected from the Citeseer dataset, 161(4.8\%) of them are actually duplicated. 

Since the Citeseer dataset shares a similar embedding generation method with Cora, there also exists the problem of information leak. For the data we collected, at least $2,055(61.8\%)$ of the papers in Citeseer contain their labels in title or abstract, which are sure to become some of the dimensions of their feature representations.
\begin{table*}[!htbp]
\small
 \newcommand{\tabincell}[2]{\begin{tabular}{@{}#1@{}}#2\end{tabular}}
  \centering
	\caption{Label terms appearing in Cora's title and abstract. }
	\vspace{0.1in}
	\label{tab:leak:cora}
	\begin{tabular}{ccccccccc}
		\toprule
    ~ & \tabincell{c}{Case \\Based} & Theory & \tabincell{c}{Probabilistic \\Methods} & \tabincell{c}{Rule \\Learning} & \tabincell{c}{Neural \\Networks} & \tabincell{c}{Reinforcement \\Learning} & \tabincell{c}{Genetic\\ Algorithms} & Total \\
		\midrule
    Total & 298 & 351 & 426 & 180 & 818 & 217 & 418 & 2708 \\
    Appeared  & 151 & 84 & 103 &  38 & 410 & 134 & 225 & 1145\\
    Rate & 50.7\% & 23.9\% & 24.2\% & 21.1\% & 50.1\% & 61.8\% & 53.8\% & 42.3\% \\
		\bottomrule
	\end{tabular}
\end{table*}

\begin{table*}[!htbp]
  \newcommand{\tabincell}[2]{\begin{tabular}{@{}#1@{}}#2\end{tabular}}
  \centering
	\caption{Label Appear in Document of Other Categories in Cora's Extractions. }
	\label{tab:label:cora}
	\begin{tabular}{cccccccc}
		\toprule
    ~ & \tabincell{c}{Case \\Based} & Theory & \tabincell{c}{Probabilistic \\Methods} & \tabincell{c}{Rule \\Learning} & \tabincell{c}{Neural \\Networks} & \tabincell{c}{Reinforcement \\Learning} & \tabincell{c}{Genetic\\ Algorithms}  \\
	\midrule
	Total & 205 & 224 & 111 & 229 & 146 & 22 & 28 \\
	Rate & 7.6\% & 8.3\% & 4.1\% & 8.5\% & 5.4\% & 0.8\% & 1.0\% \\
	\bottomrule
	\end{tabular}
\end{table*}

\begin{table*}[!htbp]
\small
  \newcommand{\tabincell}[2]{\begin{tabular}{@{}#1@{}}#2\end{tabular}}
  \centering
	\caption{Label terms appearing in CiteSeer's title and abstract. }
	\vspace{0.1in}
	\label{tab:leak:citeseer}
	\begin{tabular}{cccccccc}
		\toprule
		~ & \tabincell{c}{Artificial \\ Intelligence} & \tabincell{c} {Machine \\ Learning} & \tabincell{c}{Information \\ Retrieval} & Database & \tabincell{c}{Agents} & \tabincell{c}{Human \\ Computer \\ Interface} & Total \\
		\midrule
	Total & 264 & 590 & 668 & 701 & 596 & 508 & 3327 \\
	Appeared & 54 & 334 & 434 & 403 & 521 & 309 & 2055 \\
	Rate & 20.5\% & 56.6\% & 65.0\% &  57.5\% & 87.4\% & 60.8\% & 61.8\% \\
		\bottomrule 
	\end{tabular}   
\end{table*}    

\begin{table*}[!htbp]
\small
  \newcommand{\tabincell}[2]{\begin{tabular}{@{}#1@{}}#2\end{tabular}}
  \centering
	\caption{Label Appeared in Document of Other Categories CiteSeer's Title and Abstract. }
	\label{tab:label:citeseer}
	\begin{tabular}{ccccccc}
		\toprule
		~ & \tabincell{c}{Artificial \\ Intelligence} & \tabincell{c} {Machine \\ Learning} & \tabincell{c}{Information \\ Retrieval} & Database & \tabincell{c}{Agents} & \tabincell{c}{Human \\ Computer \\ Interface}  \\
		\midrule
    Total & 369 & 400 & 763 & 212 & 236 & 452\\
	Rate & 13.6\% & 14.8\% & 28.2\% & 7.8\% & 8.7\% & 16.7\%  \\
	\bottomrule 
	\end{tabular}   
\end{table*}        

\section{The Refined CoraR and CiteseerR dataset}
\label{app:ref}
Due to the problems in Cora and Citeseer, we produce new datasets CoraR and CiteSeerR to address these issues. We remove duplicated nodes and generate indirect pre-trained word/text embeddings as node features to reduce the information leak.
\subsection{CoraR}
We double-check the articles belong to Cora~\citep{lu2003link} dataset in the original Cora~\citep{mccallum2000automating}. Among the 2,708 papers, 474 of them have a wrong title that can hardly be found on any academic search engines such as google scholar or aminer~\citep{tang2008arnetminer}. We manually searched for these papers based on other information like author, abstract, references and feature vectors in Cora. Finally, we figure out that 32 of the total 2,708 papers are duplicated papers that actually belong to 13 identical ones. 9 papers are absolutely missing and not able to trace. We recover the actual title of the rest 2,680 papers, and use their titles to generate their features. We apply averaged GLOVE-300~\citep{pennington2014glove} word vector for their titles(with stop words removed) and we add two dimensions expressing the length of the title, and the length of the title with stop words removed. This leads to a 302 dimension feature representation for each node in CoraR. The average and word embedding process can better reduce the effect of label information leak than using simple BOW vectors. As in~\citep{hamilton2017inductive,chen2018fastgcn}, we split the 2,680 nodes into a training set of 1,180 nodes, a validation set of 500 nodes and a test set of 1,000 nodes.
\subsection{CiteSeerR}
We use Cornelia's back-up extractions of CiteSeer and manually find some other documents using author and year information from academic search engines. For the rest we only have a numerical ID and an invalid download link, so we are not able to trace them. We check the titles, abstracts and feature vectors and find 161 papers are actually 80 identical papers. We combine duplicated papers together and also remove 55 papers that we are not able to trace. Our refined CiteSeerR has 3,191 nodes and 4,172 edges. We average over the last hidden state of BERT~\citep{devlin2018bert} for each sentence from the title and abstract and produce a 768-dimensional feature for each paper. For pairs of duplicated papers with distinct labels, we manually assign a label for it. As in ~\citep{hamilton2017inductive,chen2018fastgcn}, we split the 3,191 nodes into a training set of 1,691 nodes, a validation set of 500 nodes and a test set of 1,000 nodes.

\section{Implementation Details}
\label{app:imp}
\subsection{Dr Block configuration}
For the dimensional reweighting part of each layer, the dimension of the encoder $\mathbf{g}$ is set to the closest integer of the square root of the dimension of the input node representation $\mathbf{r}^{i}$. The pooling weight $w_v$ is set to uniform weight $w_v=\frac{1}{| \mathcal{V}|}$, where $|\mathcal{V}|$ is the number of nodes in the graph for full GCNs, and the batch size for batch-wise sampling GCNs. We use ELU activation for $\sigma_g$ and sigmoid activation for $\sigma_s$. We do not apply dropout, but apply weight regularization with the same coefficient as the original GCNs in the Dr Blocks of DrGCNs.

We keep all other settings, including learning rate, early stop criteria, loss function, hidden representation dimension, batch size, weight decay, and regularization loss the same as the original models.

\subsection{Evaluation Details}
All of our methods and compared baseline methods are run 20 times, and we report the average accuracy and variation for methods that we run. We evaluate their performance mainly based on their released code and paper. For methods without codes, we use their reported performance if they share the same evaluation settings as ours.

\subsection{Method Details}
\label{app:method}

We describe our method implementation details here.

\vpara{GCN}~\citep{kipf2017semi} We use the GCN code provided in AS-GCN\footnote{\url{https://github.com/huangwb/AS-GCN}}, and use a 2-layer GCN model with 64 hidden units in layer 1. We run it 20 times and report the average and variance. For each running time, we use the model that performs best within 800 training epochs on the validation set for testing.

\vpara{GAT}~\citep{velivckovic2018graph} 

We use the GAT code provided by the authors \footnote{\url{https://github.com/PetarV-/GAT}}. We use the dataset-specific structures described in~\citep{velivckovic2018graph} and early stopping strategy mentioned in the code repo.  The original paper uses a high dropout rate of 0.6 on semi-supervised citation datasets test setting. We find that for CoraR, CiteseerR and the fully-supervised training set setting on Pubmed, such a high dropout may have a chance to lead the original GAT model not to converge, so we adjusted the dropout rate to 0.35(which gives the best performance among all dropout rates from 0 to 0.6) for both the original and our Dr-GAT. On PPI we simply follow their instructions and use their suggested structure and hyperparameters.

GAT forms a 2 layer structure for citation datasets. For Cora and Citeseer, GAT has 8 hidden units in every attention head in the first layer, and 8 attention heads in the first layer and 1 in the second layer, which has the number of hidden units equal to node classes. We adopt the structure for CoraR and CiteseerR. We also discover that for the fully-supervised training set setting on Pubmed, the structure for Pubmed in GAT paper(which has 8 heads in the second layer) does not perform as good as the GAT structure for Cora and Citeseer (this setting achieves $82.5\pm 0.3\%$ under the best dropout rate), so we also adopt the Cora/Citeseer structure to Pubmed. 

For PPI, GAT has a three layer structure, with 256 hidden units in every attention head in the first two layers. It has 4 attention heads in the first layer and 4 in the second layer, and 6 attention heads in the third layer, each third layer attention head has a number of hidden units equal to node classes. It also sets dropout equals to 0 and uses residual connection~\citep{he2016deep}.

\vpara{FastGCN}~\citep{chen2018fastgcn} We run the code provided by the authors.  \footnote{\url{https://github.com/matenure/FastGCN}} We use the weighted sampling method described in the paper, and use a neighborhood size of 400 for CoraR, CiteseerR, Pubmed and 512 for Reddit. We run 20 times and generate an average performance and variations.

\vpara{ASGCN}~\citep{huang2018adaptive} We use the code provided by the authors \footnote{\url{https://github.com/huangwb/AS-GCN}}. We use a neighborhood size of 128 for CoraR, CiteseerR, 256 for Pubmed and 512 for Reddit as the paper suggested.~\citep{huang2018adaptive} Their original code seems to have a bug causing unnecessary test set information leaking during the validation process. We modified their code to avoid such problems. We choose the best model on validation set within 800 epochs for citation datasets and 200 epochs for Reddit and use that for testing.

\vpara{Mix-Hop}~\citep{abu2019mixhop} We use the code provided by the authors \footnote{\url{https://github.com/samihaija/mixhop}}. We use the best author provided script for Pubmed. Unfortunately the parameters the authors provided for Cora and Citeseer does not work well for CoraR and CiteseerR, so we fix the network structure and tune the hyperparameter set for CoraR and CiteseerR ourselves. 

\vpara{GraphSAGE}~\citep{hamilton2017inductive} We use the code provided by the authors\footnote{\url{https://github.com/williamleif/graphsage-simple}} on CoraR, CiteseerR and Pubmed. For Reddit this code does not work so we use another code also provided by the author \footnote{\url{https://github.com/williamleif/GraphSAGE}}

\vpara{Other Methods} Many of them do not have their codes released, so we use their reported performance in their papers, or reported performance of their method by other papers, if we share a similar evaluation setting.

Specifically, the performances of all baseline methods in appendix table \ref{tab:res_2} are from their original papers (some non-GCN baseline results are from~\citep{kipf2017semi}). The performance of GraphSAGE, GeniePath, LGCN, GAT on the PPI dataset are from their original papers. 

\section{Datasets}
\label{app:dat}
The details of our datasets are listed in this section. We generally use 6 datasets, including 3 citation datasets, 1 Reddit dataset, 1 inductive PPI dataset, and 1 sizeable online recommendation A* dataset. 

\vpara{Citation Networks} 
We evaluate the performance of our DR models on the three citation network datasets, CoraR, CiteseerR, and Pubmed ~\citep{sen2008collective,yang2016revisiting}  There are two types of experimental settings, the semi-supervised setting for full GCNs ~\citep{kipf2017semi,velivckovic2018graph}, which uses only a little fraction of the node labels on the graph and all link information for training. Another fully-supervised setting~\citep{chen2018fastgcn,huang2018adaptive} uses node labels of the full graph except the validation and test set for training. The dataset statistics of original Cora, Citeseer, and Pubmed dataset under these two settings are shown in table \ref{tab:data2}. We double-check the adjacency matrices, remove self-loops, and correct the number of edges in these datasets, which is mistaken in various papers, including GCN~\citep{kipf2017semi}. Besides evaluation on CoraR, CiteseerR, and Pubmed in the main article, we provide the result of the fully-supervised setting for Cora and Citeseer in table \ref{tab:res_f}. We also provide our results on the semi-supervised setting on Cora, Citeseer, and Pubmed dataset in table \ref{tab:res_2}.

\vpara{PPI}
The protein-protein interaction dataset is collected by SNAP ~\citep{hamilton2017inductive} from the Molecular Signatures Database ~\citep{subramanian2005gene}, which is an inductive multi-label node classification task. The training set contains 20 protein graphs, while the validation and test set contains two graphs each. We evaluate the performance of different models by micro F1-score. 

\vpara{Reddit}
The Reddit dataset is collected by SNAP~\citep{hamilton2017inductive} from Reddit posts. It is a node classification dataset for classifying different communities of each user by his/her posts.  
\section{A* dataset: Dataset, Baseline and Evaluation}
\label{app:a}
As for the industrial A* dataset we use. It is an item recommendation dataset, with the training set has about 35 million users and 6.3 million items with 120 million edges. 
Although the target is node-classification like(to find the most likely items that each user may click), instead of simply taking each item as a class, A* uses a graph embedding model to generate embedding for both users and items. There are 27 user attributes and 33 item attributes. For every user, we use K nearest neighbor (KNN) with Euclidean distance to calculate the top-N items that the user is most likely to click, and the customer will see these recommended items in A* company's APP. We use the recall@N to evaluate the model: 

\beq{
    recall@N = mean( \sum_{u} \frac{| M_u | \cap |I_u|}{|I_u|} )
}

$M_u$ represents the top-N items recommended by the model and $I_u$ indicates the items clicked by the customer. The baseline model is the A* online heterogeneous GraphSAGE, and we add Dr block in it to compare Recall@N with the online model.  

Recall@50 is the most commonly used metric in A* company. Experimental results show that we reach $5.264\%$ on Recall@50, improving from the original best model's $5.188\%$. It is quite a good result, considering random guess will only give less than $0.001\%$ (50/6,300,000). 

\begin{table*}
	\caption{Supplement of Dataset statistics for citation datasets.}
	\label{tab:data2}
	\centering
	\vspace{0.1in}
	\begin{tabular}{ccccccc}
		\toprule
		~&Cora & Citeseer & Pubmed  \\
		\midrule
		Nodes  & 2,708 & 3,327 & 19,717 \\
		Edges  & 5,278 & 4,552 & 44,324 \\
		Classes & 7     & 6     & 3     \\
		Features& 1,433 & 3,703 & 500  \\
		Training Nodes(Semi) & 140 & 120 & 60 \\
		Training Nodes(Full) & 1,208 & 1,827 & 18,217 \\
		Validation Nodes & 500 & 500 & 500 \\
		Test Nodes & 1,000 & 1,000 & 1,000 \\
		\bottomrule
	\end{tabular}
\end{table*}

\begin{table*}
	\caption{Summary of classification accuracy on fully supervised Cora and Citeseer(\%).}
	\label{tab:res_f}
	\centering
	\vspace{0.1in}
	\begin{tabular}{c|ccc}
		\toprule
		Category &  Method &Cora & Citeseer   \\
		\cmidrule{1-4} 
	    \multirow{2}*{\small Full GCNs}& GCN \cite{kipf2017semi} &86.4$\pm$0.3  & 77.4$\pm$ 0.2  \\
	    & GAT \cite{velivckovic2018graph} &87.2$\pm$ 0.4&77.8$\pm$ 0.2\\
	    \cmidrule{1-4}
	    \multirow{2}*{\makecell{\small Sampling-based\\ GCNs}} 
	    & FastGCN \cite{chen2018fastgcn} &  83.9$\pm$ 0.4 & 78.6$\pm$ 0.4     \\
	    & ASGCN \cite{huang2018adaptive} & 87.2$\pm$ 0.2 & 79.0$\pm$ 0.4 \\
		\cmidrule{1-4}
		
		\cmidrule{1-4}
		\multirow{2}*{\small Dr GCNs(ours)}& Dr-GCN &86.8$\pm$ 0.2&77.5$\pm$ 0.3 \\
		& Dr-GAT &\textbf{87.4$\pm $0.2} &77.8$\pm$ 0.2 \\
		
		\cmidrule{1-4}
		\multirow{2}*{\makecell{\small Dr Sampling-based\\ GCNs(ours)}}&Dr-FastGCN   &84.0$\pm$ 0.4&78.3$\pm$ 0.3 \\
		&Dr-ASGCN  &87.1$\pm$ 0.2 &\textbf{79.1$\pm$ 0.4}  \\
		\bottomrule
	\end{tabular}
\end{table*}
\begin{table*}
\centering
	\caption{Summary of classification accuracy (\%) of semi-supervised labels on citation datasets.}
	\vspace{0.1in}
	\label{tab:res_2}
	\begin{tabular}{c|ccccc}
		\toprule
		Category &  Method & Cora &Citeseer &Pubmed \\
		\midrule
		\multirow{5}{*}{\rotatebox{90}{\small Non-Graph Convolution}} 
		&MLP  & 55.1 & 46.5 & 71.4  \\
		&ManiReg~\citep{belkin2006manifold}  & 59.5 & 60.1 & 70.7 \\
		&SemiEmb~\citep{weston2012deep} & 59.0  & 59.6  & 71.1  \\
		&LP~\citep{zhu2003semi} & 68.0 & 45.3 & 63.0    \\
		&DeepWalk~\citep{perozzi2014deepwalk} & 67.2 & 43.2 & 65.3  \\
		&ICA~\citep{lu2003link} & 75.1 & 69.1 & 73.9  \\
		&Planetoid~\citep{yang2016revisiting} & 75.7 & 64.7 & 77.2  \\
		&GraphSGAN~\citep{ding2018semi} & 83.0 & \textbf{73.1} & --  \\
		\midrule
		\multirow{8}{*}{\rotatebox{90}{\small Graph Convolution} }
		&Chebyshev~\citep{defferrard2016convolutional} & 81.2 & 69.8 & 74.4 \\
		&GCN~\citep{kipf2017semi}   & 81.5 & 70.3 & 79.0  \\
		&MoNet~\citep{monti2017geometric} & 81.7 & -- & 78.8 \\
		&DPFCNN~\citep{monti2018dual} & 83.3 & 72.6 & --\\
		& LGCN~\citep{gao2018large} & 83.3&73.0& 79.5\\
		&GAT~\citep{velivckovic2018graph}   & 83.0 & 72.5 & 79.0  \\
		& Mix-Hop~\citep{abu2019mixhop} &81.9&71.4& \textbf{80.8} \\
		\cmidrule{1-5}
		\multirow{2}{*}{\textbf{DR}-GCNs(ours)}&DR-GCN  & 81.6$\pm$ 0.1 &71.0$\pm$0.6 & 79.2$\pm$0.4  \\
		&DR-GAT   & \textbf{83.6$\pm$ 0.5} & 72.8$\pm$ 0.8 & 79.1$\pm$0.3  \\
		\bottomrule
	\end{tabular}
\end{table*}

\section{Batch-Norm and Layer-norm GCNs on citation networks}
\label{app:bl}
In Table \ref{tab:comp_cite} we also provide the batch-norm and layer-norm GCN results on publication datasets. The results are averaged among 20 runs.
\begin{table*}[t]
\centering
	\caption{GCN, BatchNorm , LayerNorm and DrGCN methods accuracy(\%) on citation datasets.}
	\vspace{0.1in}
	\label{tab:comp_cite}
	\begin{tabular}{cccc}
		\toprule
		Method & CoraR & CiteseerR & Pubmed \\
		\midrule
	    GCN &  85.9$\pm$ 0.5 & 74.9$\pm$ 0.7 & 88.0$\pm$ 0.3 \\
	    Batch-norm &86.2$\pm$ 0.5&77.5$\pm$0.4& 88.6 $\pm$ 0.3\\
	    Layer-norm &85.6$\pm$ 0.7&76.4 $\pm$ 0.9 & 89.9$\pm$ 0.4 \\
		Dr-GCN &86.8$\pm$ 0.5 &77.5$\pm$ 0.6&88.4$\pm$ 0.3 \\
		\bottomrule
	\end{tabular}
\end{table*}
\section{Proof of Theorem 1}
\label{app:p1}
We also provide proof for theorem \ref{thm1} in the main article.

Now we turn to prove theorem \ref{thm1}. We say a linear operator $T: \, H_d \rightarrow H_d$ is diagonal-off-diagonal semidirect (DOS) if and only if:
\begin{align*}
& \forall \mathbf{C} \in H_d, T(\mathbf{C})_{ii} = uc_{ii},\\   
& T(\mathbf{C})_{ij} = vc_{ii} + vc_{jj} + wc_{ij}.\\
\end{align*}
Here $u,v,w$ are constants, and we will call the set of operators with these parameters $DOS(u,v,w)$.
By the definition of $V_\phi$, the $(i,j)$ component of its output will only involve the $i-th$ and $j-th$ components of the input and symmetric with respect to them, hence itself and its derivatives $J_\phi$ will also involve them only and being symmetric with respect to them. Thus it is determined by $c_{ii},c_{ij},c_{jj}$. Furthermore, since $J_\phi$ is a linear map, so it will have this form. The result in Theorem 1 should hold in general for $DOS$ operators and do not require information about the fixed point structure.

Now $J_\phi$ is a DOS operator, hence it will belong to $DOS(u,v,w)$ for some $u,v,w$. Then we know:
\begin{align*}
||J_\phi(\mathbf{C})||^2_F 
& = \sum_{i,j=1}^d (J_\phi(\mathbf{C})_{ij})^2  \\
& = \sum_{i=1}^d (u c_{ii})^2 + \sum_{i \neq j} (vc_{ii}+vc_{jj}+wc_{ij})^2.\\
\end{align*}
Correspondingly we have:
\begin{align*}
||J_\phi(\mathbf{S}\mathbf{C} \mathbf{S})||^2_F 
& = \sum_{i,j=1}^d (J_\phi(\mathbf{S}\mathbf{C} \mathbf{S})_{ij})^2  \\
& = \sum_{i=1}^d (u s_i^2c_{ii})^2 + \sum_{i \neq j} (vs_i^2c_{ii}+vs_j^2c_{jj}+ws_is_jc_{ij})^2.\\
\end{align*}
Since the inequality we want to prove is homogeneous of degree 2 with respect to $s_i$ on both sides, hence without loss of generality we can assume $\sum_{i=1}^d s_i^2 = d$ (this choice of gauge is intended to include the case in which all $s_i=1$). Consider the function of (nonzero) $\mathbf{C} \in H_d$:
\begin{align*}
K(\mathbf{C}) : = min_{\mathbf{S}} \frac{||J_\phi(\mathbf{S}\mathbf{C} \mathbf{S})||^2_F}{||J_\phi(\mathbf{C})||^2_F }.   
\end{align*}
Here $min_\mathbf{S}$ is minimizing over diagonal $\mathbf{S}$ with $\sum_{i=1}^d s_i^2 = d$, which is a compact set, hence the minimum is achieved at some point for any fixed $\mathbf{C}$. And notice that at $\mathbf{S} = Id$, $K = 1$, so $K(\mathbf{C}) \leq 1$ and the equality will hold if and only if $\mathbf{S}=Id$ is the minimun point of:
\begin{align*}
\tilde{K}(\mathbf{C}, \mathbf{S}):& = ||J_\phi(\mathbf{S} \mathbf{C} \mathbf{S})||^2_F \\
& =\sum_{i=1}^d (u s_i^2c_{ii})^2 + \sum_{i \neq j} (vs_i^2c_{ii}+vs_j^2c_{jj}+ws_is_jc_{ij})^2,
\end{align*}
with this fixed $\mathbf{C}$. In particular, we know that at $s_1=s_2=...=s_d=1$ this function satisfties Karush-Kuhn-Tucker (KKT) conditions (which could also be derived from the method of Lagrange multiplier in this case). Next we derive the KKT condition for each component. Define:
$$
L(s_1,...,s_d,\lambda) \triangleq\sum_{i=1}^d (u s_i^2c_{ii})^2 + \sum_{i \neq j} (vs_i^2c_{ii}+vs_j^2c_{jj}+ws_is_jc_{ij})^2 - \lambda(\sum_{i=1}^d s_i^2 -d),
$$
then the KKT conditions (i.e., the extreme value condition for restricted optimization) are:
\begin{align*}
4u^2s_i^3c_{ii} + \sum_{j \neq i} 2(vs_i^2c_{ii}+vs_j^2c_{jj}+ws_is_jc_{ij})(2vs_ic_{ii}+ws_jc_{ij})- 2 \lambda s_i = 0.   
\end{align*}
Now evaluate at $s_1 = ... = s_d =1$, we have:
\begin{align*}
4u^2c_{ii} + \sum_{j \neq i} 2(vc_{ii}+vc_{jj}+wc_{ij})(2vc_{ii}+wc_{ij}) - 2 \lambda = 0.   
\end{align*}

When $v \neq 0$, the coefficient of $\mathbf{C}^2$ is nonzero. Thus this gives a quadratic defining function for those $\mathbf{C}$ where our statement may fail. Denote the left hand side of the equation by $F_i$, when $\nabla_{\mathbf{C}} F_i \neq 0$, it defines a smooth codimension 1 submanifold of $H_d$. When $\nabla_{\mathbf{C}} F_i = 0$, it gives rise to a linear equation, in which $c_{ii}$ has coefficient $4(d-1)v^2$, hence still gives rise to a codimension one smooth submanifold of $H_d$. In particular, the union of them is a codimension 1 object (not necessarily smooth after we take union). Therefore, those $\mathbf{C}$ where $\tilde{K}(\mathbf{C},\mathbf{S})$ could reach 1 have measure zero (this can be proved rigorously by outer regularity of the Haar measure $\mu$ on $H_d$). Thus, for $\mu$-almost every matrix, we could choose an $S$ with $\sum_{i=1}^d s_i^2 =d$, such that $||J_\phi(\mathbf{S} \mathbf{C} \mathbf{S})||_F < ||J_\phi(\mathbf{C})||_F$. That is, the scaling increases the stability of updates. \\

\section{Characterization of The Operator in Theorem 1}
\label{app:c1}
For $\textbf{T} \in DOS(u,v,w)$, its eigenspace has clear characterization, which is useful in the study of the behavior of $\textbf{T}$. Let $\mathcal{M}_d$ be the subspace of $H_d$ spanned by those matrices only has off-diagonal entries, which have dimension $\frac{d(d-1)}{2}$ and a basis $\textbf{M}_{ij} = \textbf{E}_{ij}+\textbf{E}_{ji}$, where $\textbf{E}_{ij}$ is the matrix with $1$ on $(i,j)$ position and 0 anywhere else. And $\mathcal{L}_d$ is the span of $\textbf{L}_i$, which is defined as:
$$
\textbf{L}_i \triangleq \begin{pmatrix}    
   & \cdots  &        & -v  &  & \cdots & \\
   & \cdots  &        & -v  &  &\cdots & \\
-v & -v & \cdots & w-u &  -v & \cdots &  \\
   & \cdots  &        & -v  & \cdots & \\
   & \cdots  &        & -v  & \cdots & \\
\end{pmatrix},
$$
where those non-zero entries are on $i-$th row and column.

\begin{thm}
For $\textbf{T} \in DOS(u,v,w), w \neq u$, $\mathcal{M}_d,\mathcal{L}_d$ are its eigenspaces, with eigenvalues $w,u$ respectively.
\end{thm}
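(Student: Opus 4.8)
The plan is to verify directly that each spanning matrix is an eigenvector, reading off the action of $\mathbf{T}$ entry-by-entry from its defining relations, and then to confirm the eigenvalues are as claimed. Throughout I write $c_{kl}$ for the $(k,l)$ entry of the input, so that $\mathbf{T}(\mathbf{C})_{kk}=uc_{kk}$ and $\mathbf{T}(\mathbf{C})_{kl}=vc_{kk}+vc_{ll}+wc_{kl}$ for $k\neq l$. Nothing here requires information about a fixed point; the whole statement is a finite linear-algebra computation valid for every $\mathbf{T}\in DOS(u,v,w)$.

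First I would handle $\mathcal{M}_d$. The generator $\mathbf{M}_{ij}=\mathbf{E}_{ij}+\mathbf{E}_{ji}$ has all diagonal entries zero and only the two off-diagonal slots $(i,j),(j,i)$ equal to $1$. Feeding this into the relations, every diagonal entry of $\mathbf{T}(\mathbf{M}_{ij})$ is $u\cdot 0=0$, and at an off-diagonal slot $(k,l)$ the contributions $vc_{kk}$ and $vc_{ll}$ vanish because the diagonal of $\mathbf{M}_{ij}$ is zero, leaving only $w(\mathbf{M}_{ij})_{kl}$. Hence $\mathbf{T}(\mathbf{M}_{ij})=w\mathbf{M}_{ij}$, so $\mathcal{M}_d$ is an eigenspace with eigenvalue $w$.

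Next I would treat $\mathcal{L}_d$, the genuinely informative case. Writing $\mathbf{L}_i=(w-u)\mathbf{E}_{ii}-v\sum_{j\neq i}(\mathbf{E}_{ij}+\mathbf{E}_{ji})$, I compute $\mathbf{T}(\mathbf{L}_i)$ entrywise. The diagonal gives $\mathbf{T}(\mathbf{L}_i)_{ii}=u(w-u)$ and $\mathbf{T}(\mathbf{L}_i)_{kk}=0$ for $k\neq i$; an off-diagonal slot $(j,k)$ with $j,k\neq i$ receives only zero contributions. The decisive slots are $(i,j)$ with $j\neq i$, where $c_{ii}=w-u$, $c_{jj}=0$, $c_{ij}=-v$, so the relation yields $v(w-u)+v\cdot 0+w(-v)=-uv=u(\mathbf{L}_i)_{ij}$. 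The coefficient $(w-u)$ on the diagonal is chosen precisely to force this collapse, and comparing all entries gives $\mathbf{T}(\mathbf{L}_i)=u\mathbf{L}_i$, so $\mathcal{L}_d$ is an eigenspace with eigenvalue $u$.

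Finally I would record that these two eigenspaces are honest and complementary, which is where the hypothesis $w\neq u$ enters. Reading the diagonal entries of $\sum_i\alpha_i\mathbf{L}_i$, namely $\alpha_k(w-u)$, forces all $\alpha_k=0$, so the $\mathbf{L}_i$ are linearly independent and $\dim\mathcal{L}_d=d$; the same computation shows $\mathcal{L}_d\cap\mathcal{M}_d=\{0\}$. Since $\dim\mathcal{M}_d+\dim\mathcal{L}_d=\tfrac{d(d-1)}{2}+d=\tfrac{d(d+1)}{2}=\dim H_d$, the two eigenspaces exhaust $H_d$ and $\mathbf{T}$ is diagonalizable. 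There is no real obstacle beyond bookkeeping; the only subtle point is the sign/coefficient cancellation in the $(i,j)$ entries of $\mathbf{T}(\mathbf{L}_i)$, which is exactly what pins down the coefficient $w-u$ and makes $w\neq u$ the natural nondegeneracy hypothesis.
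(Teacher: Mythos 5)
Your proof is correct and takes essentially the same route as the paper's: you verify $\mathbf{T}\mathbf{M}_{ij}=w\mathbf{M}_{ij}$ and $\mathbf{T}\mathbf{L}_i=u\mathbf{L}_i$ entrywise from the defining relations of $DOS(u,v,w)$, use $w\neq u$ to separate $\mathcal{L}_d$ from $\mathcal{M}_d$, and conclude by the dimension count $\tfrac{d(d-1)}{2}+d=\tfrac{d(d+1)}{2}=\dim H_d$. The only difference is one of detail: you make explicit the key cancellation $v(w-u)+w(-v)=-uv$ in the $(i,j)$ slots, which the paper leaves as an unstated ``component level'' computation.
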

\begin{proof}
Here the condition $w \neq u$ ensures that $\mathcal{L}_d$ is linearly independent with elements in $\mathcal{M}_d$ since it spans the diagonal part of $H_d$. The results $\textbf{T}\textbf{M}_{ij} = w\textbf{M}_{ij},\textbf{T}{L_i} = u L_i$ could be calculated using the definition equations of $DOS(u,v,w)$ and consider them on component level. Since $\textbf{T}$ is a linear operator, verifying these eigen properties on the basis is enough for the result. Furthermore, the space we have specified spans a $\frac{d(d-1)}{2}+d = \frac{d(d+1)}{2} =dim H_d$ dimensional space, hence it is the whole $H_d$. Thus we have completely characterized the eigenspaces of such $\textbf{T}$.
\end{proof}

\section{Diagram of DrGCN}
\begin{figure}[!htbp]
    \centering
    \includegraphics[width=0.6\textwidth]{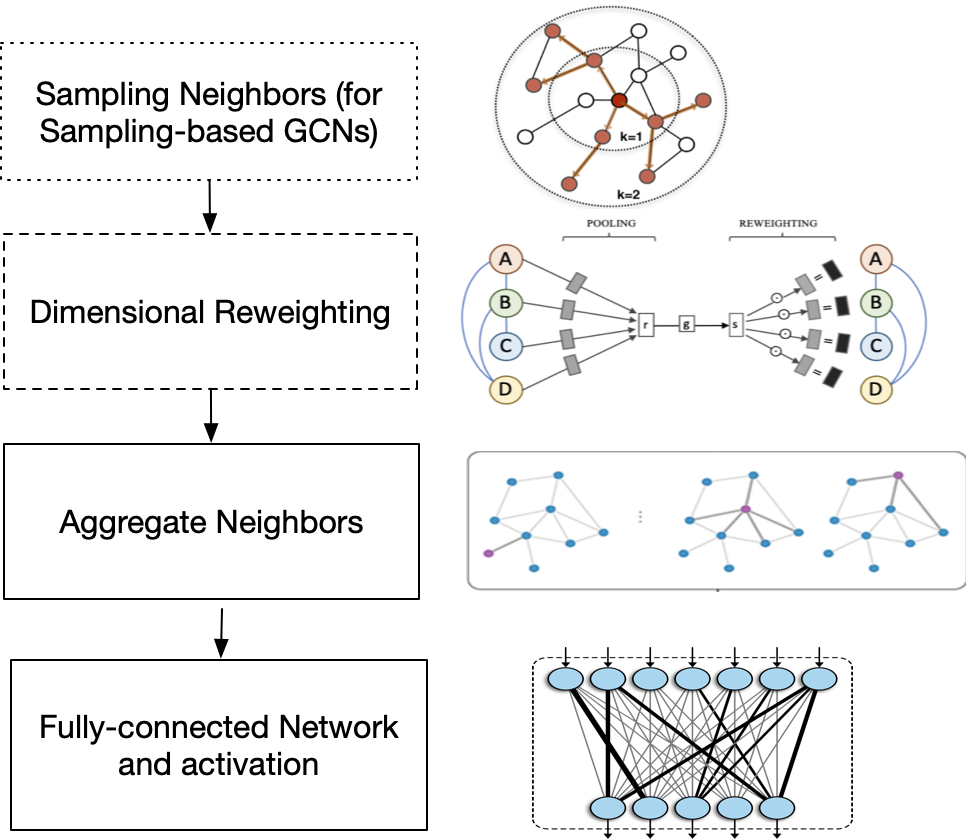}
    \caption{Diagram of a DrGCN layer. The "Sampling Neighbors" procudure is only applied in sampling-based DrGCNs. }
    \label{fig:my_label}
\end{figure}

\section{Pre-activation GCNs}

As discussed in section 3, Our theoretical analysis is based on the pre-activation setting, while common GCN methods use post activation. Although they are basically the same if we consider the activation in the pre-activation setting as the activation of the previous layer in the post-activation setting, there is still a little difference between pre and post activation that pre-activation activates the input feature. So we also experiment pre-activation on GCN, see table \ref{tab:post}. Results are averaged among 20 runs.

\begin{table*}[h]
\centering
	\caption{Results of GCN with pre and post activation on citation datasets.}
	\vspace{0.1in}
	\label{tab:post}
	\begin{tabular}{cccc}
		\toprule
		Method & CoraR & CiteseerR & Pubmed \\
		\midrule
	    GCN(pre activation) &  84.7$\pm$ 0.7 & 74.1$\pm$ 0.4 & 87.9$\pm$ 0.3 \\
	    Dr-GCN(pre activation) &85.1$\pm$ 0.6&74.8$\pm$0.8& 88.2 $\pm$ 0.2\\
	    GCN &85.9$\pm$ 0.7&74.9 $\pm$ 0.7 & 88.0$\pm$ 0.3 \\
		Dr-GCN &86.8$\pm$ 0.5 &77.5$\pm$ 0.6&88.4$\pm$ 0.3 \\
		\bottomrule
	\end{tabular}
\end{table*}
\end{appendices}
\end{document}